\pdfoutput=1

\documentclass[twoside]{article}
\usepackage[preprint]{aistats2027}
\usepackage{fancyhdr,amsmath,amssymb,graphicx,xcolor}

\usepackage[round,authoryear]{natbib}
\usepackage{hyperref}

\usepackage{framed}
\usepackage{capt-of}

\newif\ifshownotes
\shownotestrue %
\newcounter{notecounter}
\newcounter{subnotecounter}[notecounter]

\newif\ifinsidenote

\def\namealgone{\texttt{Alg1}}
\def\namealgtwo{\texttt{Alg2}}

\newcommand{\ie}{\textit{i.e.}}
\newcommand{\eg}{\textit{e.g.}}

\usepackage{etoolbox}

\makeatletter
\newcommand{\toc@entrylevel}{subsection}
\pretocmd{\section}{\gdef\toc@entrylevel{subsection}}{}{}
\pretocmd{\subsection}{\gdef\toc@entrylevel{subsubsection}}{}{}
\pretocmd{\subsubsection}{\gdef\toc@entrylevel{paragraph}}{}{}
\pretocmd{\paragraph}{\gdef\toc@entrylevel{subparagraph}}{}{}
\pretocmd{\subparagraph}{\gdef\toc@entrylevel{subparagraph}}{}{}

\newcommand{\tocparagraph}[1]{%
  \let\toc@savedentrylevel\toc@entrylevel
  \par\phantomsection
  \paragraph*{#1}%
  \addcontentsline{toc}{\toc@savedentrylevel}{\protect\numberline{}Paragraph: #1}%
  \global\let\toc@entrylevel\toc@savedentrylevel
}

\newif\ifunited@jmlr
\@ifclassloaded{jmlr}{\united@jmlrtrue}{\united@jmlrfalse}
\ifunited@jmlr
  \newtheorem{assumption}[theorem]{Assumption}%
  \newcommand{\tocthmline}[2]{%
    \phantomsection
    \addcontentsline{toc}{\toc@entrylevel}{%
      \protect\numberline{}#1\ifx\relax#2\relax\else: #2\fi}}%
  \apptocmd{\@begintheorem}{\tocthmline{#1}{}}{}%
    {\PackageError{united_macros}{could not patch \string\@begintheorem}{}}%
  \apptocmd{\@opargbegintheorem}{\tocthmline{#1}{#3}}{}%
    {\PackageError{united_macros}{could not patch \string\@opargbegintheorem}{}}%
\else
  \RequirePackage{amsthm}%
  \RequirePackage{thmtools}%
  \RequirePackage{thm-restate}%
  \newcommand{\tocthmline}{%
    \addcontentsline{toc}{\toc@entrylevel}{\protect\numberline{}%
      \thmt@thmname\@ifundefined{thmt@optarg}{}{\ifx\thmt@optarg\@empty\else: \thmt@optarg\fi}}}%
  \declaretheorem[style=plain,name=Theorem,postheadhook=\tocthmline,numberwithin=section]{theorem}%
  \declaretheorem[style=plain,name=Proposition,postheadhook=\tocthmline,sibling=theorem]{proposition}%
  \declaretheorem[style=plain,name=Remark,postheadhook=\tocthmline,numberwithin=section]{remark}%
\fi
\makeatother

\newcommand{\x}{\times}

\newcommand{\MDP}{\mathcal{M}}
\newcommand{\statespace}{\mathcal{S}}
\newcommand{\state}{s}
\newcommand{\State}{S}
\newcommand{\cemeterystate}{\state_\dagger}
\newcommand{\actionspace}{\mathcal{A}}
\newcommand{\action}{a}
\newcommand{\Action}{A}
\newcommand{\transition}{P}
\newcommand{\reward}{r}
\newcommand{\randreward}{R}
\newcommand{\horizon}{H}
\newcommand{\mask}{m}
\newcommand{\maskpar}{u}

\newcommand{\classifier}{h}
\newcommand{\hypclass}{\Hc}
\newcommand{\feature}{\phi}
\newcommand{\Feature}{\Phi}
\newcommand{\transitionfeature}{\psi}
\newcommand{\rewardfeature}{\omega}
\newcommand{\cemetery}{s_\dagger}
\newcommand{\featuremask}{\widetilde{\feature}}
\newcommand{\featurespace}{\mathcal{W}}
\newcommand{\maskclass}{\mathcal{G}}
\newcommand{\policy}{\pi}
\newcommand{\regret}{\mathscr{R}}
\newcommand{\episode}{k}
\newcommand{\episodes}{K}
\newcommand{\projection}{\Pi}
\newcommand{\gap}{\Delta}
\newcommand{\diameter}{D}

\newcommand{\data}{\Dc}

\newcommand{\startlaw}{\varrho}
\newcommand{\pair}{z}
\newcommand{\pairspace}{Z}
\newcommand{\pairfeature}[1]{\feature_{#1}}
\newcommand{\maskMDP}[1]{\MDP_{#1}}
\newcommand{\validactions}[1]{\actionspace_{#1}}
\newcommand{\maskedreward}[1]{\widetilde{\reward}_{#1}}
\newcommand{\maskedtransition}[1]{\widetilde{\transition}_{#1}}
\newcommand{\firstinvalid}{\tau}
\newcommand{\revealset}{\mathrm{Rev}}
\newcommand{\episodecost}{c}
\newcommand{\versionstate}{\mathcal{V}}
\newcommand{\observedpairs}{\data}
\newcommand{\observedspan}{U}
\newcommand{\decideddirections}{U^{+}}
\newcommand{\spn}{\operatorname{span}}
\newcommand{\supportlower}{\sigma^{-}}
\newcommand{\supportupper}{\sigma^{+}}
\newcommand{\certvalid}{\mathcal{D}^{+}}
\newcommand{\certinvalid}{\mathcal{D}^{-}}
\newcommand{\undecided}{\mathscr{U}}
\newcommand{\undecidedrank}[1]{\dim\featurematrix(\undecided(#1))}
\newcommand{\markovpolicies}{\Pi^{\mathrm M}}
\newcommand{\bestversion}{\mathcal{U}}
\newcommand{\minreachprob}{p_{\min}}
\newcommand{\undecidedhittingtime}{\tau_{\undecided}}
\newcommand{\episodicfiltration}{\mathcal{F}}
\newcommand{\withinhistory}{\mathcal{H}}
\newcommand{\signpattern}{\sigma}
\newcommand{\cells}{\operatorname{cells}}
\newcommand{\optimisticgate}{\widehat{\actionspace}}
\newcommand{\optimisticvalue}{\widehat V}
\newcommand{\optimisticq}{\widehat Q}
\newcommand{\absorptionevent}{E}
\newcommand{\probingmistake}{M}
\newcommand{\cumulativeregret}{R}

\newcommand{\route}{\rho}
\newcommand{\routes}{\Pi}
\newcommand{\abortaction}{a^{\circ}}
\newcommand{\undecidedspan}{\mathsf{S}}
\newcommand{\undecidedpotential}[1]{\Psi_{#1}}
\newcommand{\potential}{\Psi}
\newcommand{\efficiencycoef}{\Gamma}
\newcommand{\price}{\mathfrak{P}}
\newcommand{\priceoperator}{\mathrm{B}}
\newcommand{\stoppingepisode}{\varsigma}
\newcommand{\certificate}{\widehat{\bestversion}}
\newcommand{\pessimisticvalue}{V^{-}}
\newcommand{\certifiedactions}{\actionspace^{-}}
\newcommand{\certifiedmodel}{\MDP^{-}}
\newcommand{\certifiedpolicy}{\policy^{-}}
\newcommand{\collected}{\operatorname{col}}
\newcommand{\bonusvalue}{J}
\newcommand{\continuationvalue}{K}
\newcommand{\certprice}{\lambda^{-}}
\newcommand{\optprice}{\widehat{\lambda}}
\newcommand{\optshortfall}{\widehat{c}}
\newcommand{\target}{\mathscr{T}}
\newcommand{\aliasedclass}{\mathcal{T}}
\newcommand{\informative}{\mathcal{I}}
\newcommand{\staticbound}{\Lambda}
\newcommand{\cellvalue}{\mathring{V}}
\newcommand{\realprice}{\mathring{\lambda}}
\newcommand{\realcertificate}{\mathring{\bestversion}}
\newcommand{\realshortfall}{\mathring{c}}
\newcommand{\cellpolicy}{\policy^{\gamma}}
\newcommand{\refutationprob}{\mathsf{p}}
\newcommand{\valueloss}{c}
\newcommand{\rivals}{O}
\newcommand{\partialdirections}{l}

\newcommand{\falscomplexity}{\mathsf{C}_{\mathrm{fals}}}
\newcommand{\harmonic}[1]{\mathsf{H}_{#1}}

\newcommand{\inner}[2]{\langle #1,#2\rangle}

\usepackage{mathrsfs}
\usepackage[capitalize,noabbrev]{cleveref}
\crefname{assumption}{Assumption}{Assumptions}
\Crefname{assumption}{Assumption}{Assumptions}

\providecommand{\abs}[1]{\left\lvert #1 \right\rvert}

\providecommand{\de}{\mathrm{d}}
\providecommand{\R}{\mathbb{R}}
\providecommand{\Pb}{\mathbb{P}}

\providecommand{\E}{\mathbb{E}}
\providecommand{\1}{\mathbf{1}}
\providecommand{\Hc}{\mathcal{H}}

\newcommand{\baseMDP}{\MDP_{\mathrm{source}}}
\newcommand{\taskMDP}[1]{\MDP_{#1}}
\newcommand{\masksequence}{\Theta}
\newcommand{\history}{\mathcal{H}}
\newcommand{\dataset}{\mathcal{D}}
\newcommand{\filtration}{\mathcal{F}}
\newcommand{\confset}{\mathcal{C}}
\newcommand{\compatset}{\mathcal{BC}}
\newcommand{\transferinput}{\mathcal{I}}
\newcommand{\algorithmclass}{\mathbb{A}}
\newcommand{\taskalgorithm}{\mathfrak{A}}
\newcommand{\metapolicy}{\boldsymbol{\Pi}}

\newcommand{\augstatespace}{\statespace^{+}}
\newcommand{\featurematrix}{\phi}
\newcommand{\statefeaturematrix}[1]{\featurematrix(#1)}
\newcommand{\transitionfeaturematrix}{\mathbf{\Psi}}
\newcommand{\maskmatrix}{\mathbf{V}}
\newcommand{\policymatrix}{\mathbf{\Pi}}
\newcommand{\survivalkernel}{T}

\DeclareMathOperator*{\argmax}{argmax}
\DeclareMathOperator*{\argmin}{argmin}

\DeclareMathOperator{\sgn}{sgn}

\newenvironment{proofof}[1]{\begin{proof}[Proof of {#1}]}{\end{proof}}

\input{macro_table.tex}

    \def\Eb{\mathbb{E}}

    \def\Nb{\mathbb{N}}
    
    \def\Pb{\mathbb{P}}
    
    \def\Rb{\mathbb{R}}

    \def\Ac{\mathcal{A}}
    \def\Bc{\mathcal{B}}
    \def\Cc{\mathcal{C}}
    \def\Dc{\mathcal{D}}

    \def\Hc{\mathcal{H}}
    
    \def\Ic{\mathcal{I}}

    \def\Oc{\mathcal{O}}

    \def\As{\mathscr{A}}

    \def\Ts{\mathscr{T}}

\usepackage{algorithm,algorithmic}

\begin{document}

\twocolumn[\aistatstitle{Confessions of a Masked MDP}

\aistatsauthor{ Sebastian Reboul \And Lorenzo Croissant \And Hélène Halconruy }

\aistatsaddress{ SAMOVAR, Télécom SudParis, \\ Institut Polytechnique de Paris \\ Wiremind Cargo
  \And AI \& ML Group \\ Department of Engineering \\ Universitat Pompeu Fabra
  \And SAMOVAR, Télécom SudParis, \\ Institut Polytechnique de Paris \\ Modal'X, Université Paris-Nanterre } ]

\begin{abstract}

In reinforcement learning, one presumes the action set to be known ahead of time, and, thus, one must know if any state-action pairs are not permissible ahead of computations. 
In some applications, determining this action \emph{mask} is the main computational bottleneck. 
Isolating this problem to highlight its core properties, we study how to combine the search for an optimal policy with the learning of the mask itself, under a linear MDP model in which each attempted action reveals the exact value of a linear score determining its validity. In this framework, we first construct an  algorithm based on deterministic optimistic optimization and show that it is minimax optimal, which we show is also computationally efficient. 
However, it is sub-optimal on non-degenerate instances, as we show.
We propose an alternative, which is also computationally efficient, which plans its  exploration strategy. 
This algorithm admits a formal interpretation as an analogue of information-directed-sampling for deterministic feedback, which may be of independent interest.
Finally, we study an application of this framework to transfer learning, where the invalid action set changes over time. 

\end{abstract}

\section{Introduction}

Reinforcement learning, the interactive learning of control systems and games \citep{tesauro1995temporal,sutton2018reinforcement}, and in particular its formalisation using Markov Decision Processes (MDPs) \citep{puterman1994markov}, has been an undeniably successful field of research in the last thirty years. Applications span far and wide, from datacenter cooling \citep{gao2014machine}, to autonomous driving \citep{shalev2016safe}, through advertising \citep{liu2017deep} and conversational AI training \citep{li2016deep}. 

In the MDP formalisation, the set of available actions is typically assumed to be independent of the state of the system. In the classical theory \citep{bellman1957dynamic}, this assumption is a convenience and varying actions can be accommodated at the cost of some burdensome notation \citep[see, \eg{},][Ch.~6]{puterman1994markov}.

In computation, however, ever since RL methods have shifted to GPU-accelerated computation \citep{mnih2015human}, varying actions have become nefarious spoilers. 
Indeed, GPU-acceleration of, \eg{}, Value Iteration, relies on the ability to broadcast operations over the entire state-action space.  
The solution, early adopted by practitioners, is to use a boolean \emph{invalid action mask} to zero-out the rewards (and/or transitions) corresponding to invalid actions. Hadamard multiplication by the mask preserves the tensor structure, and is accelerated. Remarkably, there has been some theoretical work showing that masking preserves the convergence of Policy Gradient \citep{huang2020closer}.

This state of affairs is acceptable in many applications. However, in some cases, computing the mask is itself the computational bottleneck. For instance, in civil engineering applications, each action corresponds to a refinement of the current design (state), and determining whether it satisfies the project’s design constraints may require extensive simulations. Precomputing all possible designs in order to check their feasibility for a building is, quite evidently, a prohibitively expensive approach to architecture.  

When pre-computation of the mask is impossible, the current approach is to penalise invalid actions through reward shaping \citet{Ng1999Policy}, which is a form of Lagrangian relaxation. However, when it comes to learning, this approach is fundamentally flawed: it treats the \emph{deterministic} signal of permissibility and the stochastic reward signal homogeneously, a dilution which explains the terrible performance of these as \emph{learning} methods.

One would expect the literature on \emph{constrained} MDPs to have addressed this problem somehow. Yet, both classical \citep{altman1999constrained} and more recent \citep{achiam2017constrained,amani_safe_2021} work focuses on expected costs, with the aim of controlling violations. In our setting, due to determinism of signals, there is no reason to repeat a known violation. The closest work to ours is, thus, that of \citet{wei_safe_2023}, whose instantaneous hard constraints are in the same spirit, but whose regret bounds are dominated by the minimax analysis and stochasticity of the rewards in the interior of the constraints, which prevents the identification of optimal exploration strategies.

At the end of the day, we are left to face an MDP with masked actions, a \emph{masked} MDP, if you will, with the problem of how to efficiently explore it to identify the mask. Motivated by the computational cost of the mask itself, we focus on $0$-noise feedback, analogous to membership-query and realizable-classification settings \citep{angluin1988queries,littlestone1988learning}. We will show, with \cref{thm:separation}, that this is sufficient to achieve optimal regret if one must also learn the MDP dynamics and/or rewards. While the mask confesses its values freely, the problem of reaching the information efficiently through the MDP dynamics remains. 

In the remainder of this section, we detail the setting explicitly, and formally introduce the problem of mask learning. Thereafter, in \cref{sec:mask learning}, we establish a minimax-optimal algorithm based on Optimism in the Face of Uncertainty (OFU), and argue that this is not a good benchmark for this problem. In \cref{sec: instance dep}, we thus study finer, instance-dependent bounds, and introduce an algorithm which improves the exploration of OFU on non-degenerate instances, and connect it to the information-directed sampling (IDS) algorithm of \citet{russo2014information}, also identifying an auxiliary MDP formulation, \textit{à la } \citet{hao2022idrl}. Finally, in \cref{sec:transfert}, we discuss an example application to transfer learning of masks in MDPs, motivated by an industrial application.

\subsection{Setting}

As a basis, let $\MDP := (\statespace, \actionspace, \transition, \reward, \horizon)$ be a finite-horizon MDP, with state space $\statespace$, action space $\actionspace$, transition kernel $\transition:\statespace \times \actionspace\times\statespace \to [0,1]$, with $\sum_{s' \in \statespace} \transition(\state,\action,\state') = 1$, reward  $\reward:\statespace \times \actionspace \to [0,1]$,  and horizon $\horizon\in\Nb$.

We will adopt the formalism of \emph{linear MDPs} (without loss of generality as it subsumes tabular MDPs). Thus, we assume that there exists a known feature map $\feature:\statespace \times \actionspace \to \Rb^d$, $d\in\Nb$, such that, for all $(\state,\action) \in \statespace \times \actionspace$,
\[
\reward(\state,\action) = \feature(\state,\action)^\top \theta_r
\]
for some parameter $\theta_r \in \Rb^d$. Moreover, we assume that the transition kernel is linear in the features, \ie\ there is parameter-valued function $\theta_P:\State\to\Rb^d$ over $\statespace$ such that
\[
\transition(\state,\action,\state') = \feature (\state,\action)^\top \theta_P(\state')
\]
for all $(\state,\action,\state') \in \statespace \times \actionspace \times \statespace$.

From $\MDP$, we can define a family of \emph{masked} MDPs, indexed by the masks $m\in 2^{\statespace\x\actionspace}$, viewed either as function from $\statespace\x\actionspace\to\{0,1\}$ or a $\{0,1\}$-valued matrix in $\Rb^{\statespace\x\actionspace}$. The masked MDP $\MDP_m := (\statespace, \actionspace, \transition, \reward, \horizon, m)$ is defined on the augmented state space $\statespace\cup\{\cemeterystate\}$, where $\cemeterystate$ denotes an absorbing cemetery state. Its reward function is given for $(\state,\action)\in\statespace\x\actionspace$ by
\[
r_m(\state,\action)=m(\state,\action)r(\state,\action),
\]
with $r_m(\cemeterystate):=0$. For $(\state,\action,\state')\in\statespace\x\actionspace\x\statespace$, the transition kernel of $\MDP_\mask$ is defined by
\[
P_m(\state,\action,\state')=m(\state,\action)P(\state,\action,\state'), 
\]
with the remaining probability mass assigned to the cemetery state:
\[
P_m(\state,\action,\cemeterystate)
:=
1-\sum_{\state'\in\statespace}
m(\state,\action)P(\state,\action,\state').
\]
Finally, since $\cemeterystate$ is absorbing, $P_m(\cemeterystate,\cdot,\cemeterystate)=1$.

Note that a masked linear MDP remains a linear MDP, up to increasing the dimensions by $1$. Indeed the new feature map $\widetilde{\feature}:\statespace\cup\{\cemeterystate\}\x\actionspace\to\Rb^{d+1}$ is defined as $\widetilde{\feature}(\state,\action)
 =\bigl(m(\state,\action)\feature(\state,\action),1-m(\state,\action)\bigr)\in\Rb^{d+1}.$
Thus, in order to ease exposition and maintain consistent algebraic tools, we will also represent the mask linearly using the embedding $\feature $, \ie{} we assume that there is $\maskpar_\star\in\Rb^d$ such that 
\[ m(\state,\action)=\sgn(\feature (\state,\action)^\top \maskpar_\star) \quad\mbox{for all } (\state,\action)\in\statespace\x\actionspace.\]
When clear from context, we drop subscripts of $m$.

In a (masked) MDP, given any $h\in[\horizon]$ and $\policy:[\horizon]\x\statespace\to\actionspace$, let $(\State_h^{t,\state,\pi})_{h=t}^\horizon$ be the Markov chain induced by $\policy\in\Pi_M$ (Markov policies) and $P$ starting from $S_t^{t,\state,\pi} = \state$. It is well known that there is a cumulated-reward-maximising policy of this form, and we can solve the control problem by limiting ourselves to maximising, over $\pi\in\Pi_M$, the value functions $J_\pi$ defined for $(t,\state)\in [\horizon]\times\State$ by
\begin{equation*}
J_\pi(t,\state):=\Eb\left[\sum_{h=t}^{\horizon}r\bigl(\State_h^{t,\state,\pi},\pi_h(\State_h^{t,\state,\pi})\bigr)\right].
\end{equation*}
Moreover, we will denote the value function \[V:(t,\state)\in[\horizon]\x\statespace\mapsto V(t,\state):=\sup_{\policy}J_\policy(t,\state),\] where the supremum is taken over all Markov time-dependent policies $\policy$.
When necessary to clarify a mask on the MDP, we will add superscripts to $J$, $V$, or $\State$ to indicate the mask, \eg{} $J^m$, $V^m$, and $\State^{m,\policy}$. When considering an initial distribution $\startlaw $ over states, we will use the shorthand $V(h,\startlaw):=\int V(h,\state)\startlaw(\de\state)$ and likewise for $J$.

\section{Minimax mask learning}\label{sec:mask learning}

On the masked MDP, we can consider three main learning problems: the mask, the dynamics, and the combination of both. In this section, we draw a general picture of the mask learning problem with broad strokes, focusing on minimax-optimality.

\subsection{Learning problems on masked MDPs}

Clearly, if the mask is known but the dynamics must be learnt, the problem is statistically indistinguishable from a classical RL problem \citep{jin2020provably}. By comparison, we will focus on the other two problems where the mask is unknown, due to the absence of its precomputation, and an agent observes only sequences of state-action-reward-mask quadruples. 
Meanwhile, the learning problem of the dynamics is naturally formulated as an episodic online RL problem \citep{jin2020provably}, wherein visiting a state-action pair $(\state,\action)$ reveals a noisy signal of the reward and a sampled transition from $P(\state,\action,\cdot)$. Notice that, due to the presence of a cemetery state, one must consider an episodic setting, rather than a non-episodic one.

Let $\maskpar^\star$ denote the unknown true parameter of the mask's linear form.  Letting $\Ac$ be a learning algorithm which produces a sequence of action trajectories $\{(\Action_{k,h})_{h=1}^\horizon\}_{k\in\Nb}$ by playing policies $(\pi_{k})_{k\in\Nb}$, we measure its performance by its \emph{cumulative pseudo-regret}, which, after $K\in\Nb$ episodes, is defined as
\[ \regret_K(\Ac) := \sum_{k=1}^K \sup_{\pi} J_{\policy}(1,\startlaw) - J_{\policy_k}(1,\startlaw).\]

We continue, in \cref{subsec:minimax regret}, by the construction of a minimax-optimal optimistic algorithm. Then, in \cref{subsec: separation}, we show that any such algorithm is a sufficient building block for learning transitions and rewards if necessary. This will complete the picture of these three learning problems.

\subsection{Minimax Regret bounds}\label{subsec:minimax regret}

Naturally, one would begin a study of learning masks in a masked MDP by studying minimax-optimal regret. However, as \cref{thm: lower bound for mask learning} will show, this paradigm is a poor benchmark for understanding the moving pieces of the problem. Indeed, due to the linear embedding, the number of mistakes that can be made by any reasonable\footnote{Understood to mean an algorithm which never takes any action it could have known was masked.} algorithm is determined by the feature map in a tight manner. 

In order to keep track of available information about the mask, we will use a \emph{version space} \citep[{as in, \eg{},}][]{angluin1988queries} which contains all system hypotheses consistent with observed values of the mask. Let $\data\subset\statespace\x\actionspace$ be a collection of state-action pairs $(\state_i,\action_i)$, $i\in[n]$, and define $y\in\Rb^n$ by $y_i := \langle\feature(s_i,a_i) \vert \maskpar^\star\rangle$. The version space compatible with $\observedpairs$ is 
\begin{equation*}
 \versionstate(\observedpairs) = \confset_0 \,\cap\, \bigl\{\maskpar \in \R^d : \Phi_{\observedpairs}\maskpar = y\}
\end{equation*}
wherein $\Phi_{\observedpairs}=(\phi(s_i,a_i)^\top)_{i=1}^n$, and $\confset_0\subset\Rb^d$ is an a priori feasible set, say the unit ball. More broadly, we call $\versionstate\subset \Rb^d$ a version space if there is a set $\observedpairs\subset\statespace\x\actionspace$ such that $\versionstate=\versionstate(\observedpairs)$.

Whenever the learner plays $(\state,\action)\in\statespace\x\actionspace$, it observes the exact score $y_i = \feature(\state,\action)^\top\maskpar^\star$, (whether the action is valid or not), and thus $m(\state,\action)$, as well as the reward and next state of $\MDP_m$ (the next state being $\cemeterystate$ if the action is invalid). This noiseless feedback model is an extension of the deterministic bandit problems \citep{munos2011optimistic}.

Given a version space $\versionstate$, the linear structure of the mask partitions the state-action space into three sets: the \emph{certified valid} pairs,
\begin{equation*}
\certvalid(\versionstate):=\big\{(s,a)\in\statespace\x\actionspace:\inf_{\maskpar\in\versionstate}\feature(s,a)^\top\maskpar> 0\big\},
\end{equation*}
the \emph{certified invalid} pairs,
\begin{equation*}
\certinvalid(\versionstate):=\big\{(s,a)\in\statespace\x\actionspace:\sup_{\maskpar\in\versionstate}\feature(s,a)^\top\maskpar<0\big\},
\end{equation*}
and the remaining \emph{uncertified} pairs,
\begin{equation*}
\undecided(\versionstate):=(\statespace\times\actionspace)\setminus\bigl(\certvalid(\versionstate)\cup\certinvalid(\versionstate)\bigr).
\end{equation*}
In other words, a pair $(s,a)$ is \emph{uncertified} whenever the range of $\theta\in\versionstate\mapsto\langle\feature(s,a),\theta\rangle$ includes zero.

\begin{figure}[ht]
    \centering
    \includegraphics[width=.66\linewidth]{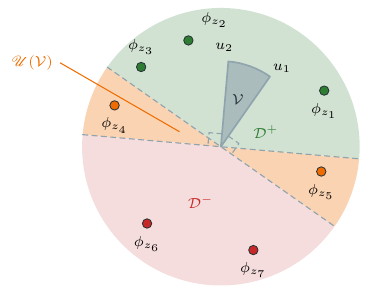}\label{fig:trichotomy}
    \captionof{figure}{Visualisation of $\versionstate$ and $(\certvalid,\certinvalid,\undecided)$.}
\end{figure}

As the learner plays, it generates a non-decreasing sequence of sets $(\data_t)_{t\ge1}$ by interacting with the environment, whose parameter $\maskpar^\star$ lies in the intersection of the non-increasing sequence $(\versionstate_t)_{t\ge1}$, where $\versionstate_t:=\versionstate(\data_t)$.

Given $\observedpairs$, $\versionstate(\observedpairs)$  is defined by linear equalities (up to intersection with $\confset_0$), so that sampling a new pair $(s,a)$ cannot affect the version space unless $\phi(s,a)$ is linearly independent of $\observedpairs$. 
This implies that the number of mistakes made by a reasonable algorithm (that never plays in $\certinvalid(\versionstate_t)$) is at most $$\dim\feature(\undecided(\versionstate_0))\le d\le\abs{\statespace}\abs{\actionspace}.$$
This quantity will be the central focus of our exploration analysis, however, it should be noted features linearly dependent on $\feature(\versionstate_t)$ can still identify the validity of other pairs (for instance $\phi_{z_4}$ and $\phi_{z_5}$ in \cref{fig:trichotomy}), which we will use to improve exploration. 

\begin{restatable}{theorem}{lowerb}
\label{thm: lower bound for mask learning}
    Let $\confset_0=\Bc_2(0,1)$ be the Euclidean ball. For every $d\ge 2$ and every $\horizon\ge 1$, there is a masked MDP with $\undecidedrank{\confset_0}=d$ on which any algorithm satisfies
    \[
        \sup_{\maskpar^\star\in\confset_0}
        \E_{\maskpar^\star}
        \bigl[\cumulativeregret(T)\bigr]
        \ge
        \frac{\horizon}{4}\min\{T,d-1\}.
    \]
\end{restatable}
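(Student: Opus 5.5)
We construct a hard instance in which each episode is a single "probe" of the mask, and in which every probe that is not yet informed costs $\horizon$ reward with constant probability. We then apply a standard randomisation (Yao) argument: lower-bound the Bayesian regret under a prior on $\maskpar^\star$ supported in $\confset_0$, which lower-bounds the supremum in the statement.

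\textbf{Instance.} Take a start state $\state_0$ and actions $\action_1,\dots,\action_{d-1}$ plus a safe action $\action_0$. Set $\feature(\state_0,\action_i)=e_i$ for $i\in[d-1]$ and $\feature(\state_0,\action_0)=e_d$. Action $\action_0$ leads to an absorbing zero-reward chain. Action $\action_i$, if valid, leads to an absorbing chain paying reward $1$ at every remaining step, and the same $\action_i$ is offered at later states with features $e_i$ again.

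We need $\undecidedrank{\confset_0}=d$. Since $\feature(\undecided(\confset_0))$ spans $\Rb^d$ (every $e_j$ is uncertified under the unit ball), this holds. To avoid trivial optimism failing, we also put reward $1/2$ on $\action_0$'s chain and scale so that a valid $\action_i$ yields $\horizon$ while invalid yields $0$. The $\horizon/4$ in the statement suggests exactly this: regret at least $\horizon/2$ with probability $1/2$.

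\textbf{Prior.} Draw $\maskpar^\star=\frac{1}{\sqrt{d}}(\varepsilon_1,\dots,\varepsilon_{d-1},1)$ with i.i.d.\ Rademacher signs $\varepsilon_i$, which lies in $\confset_0$. A key design point is that exactly one $i$ should be valid, so that the optimum is informative. Concretely, pick $I$ uniform in $[d-1]$ and put $\maskpar^\star\propto e_I-\sum_{j\ne I}e_j+\tfrac12 e_d$ normalised. Then only $\action_I$ is valid, the optimal value is $\horizon$, and $\action_0$ gives $\horizon/2$.

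Feedback from playing $\action_j$ reveals the single coordinate $\maskpar^\star_j$, that is, whether $j=I$, and nothing about the other coordinates. This holds because the observations $e_j^\top\maskpar^\star$ are coordinate queries.

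\textbf{Counting argument.} Each episode queries at most one new coordinate at $\state_0$: the first action decides the episode, since an invalid action sends the agent to $\cemeterystate$ and a valid one ends the search. An episode before $I$ is identified suffers regret at least $\horizon/2$, whether the agent plays $\action_0$ or an invalid $\action_j$.

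Under the uniform prior, the number of episodes needed to find $I$ by sequential coordinate queries is uniform on $[d-1]$ for any adaptive strategy. Hence the expected number of non-identifying episodes among the first $T$ is at least $\min\{T,d-1\}/2$ up to rounding. This gives a Bayesian regret of at least $\frac{\horizon}{4}\min\{T,d-1\}$, and the supremum over $\maskpar^\star$ dominates the Bayes average.

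\textbf{Main obstacle.} The hard step is making the per-episode information exactly one coordinate despite the linear structure. A reasonable algorithm could otherwise exploit linear dependence, as in the remark about $\phi_{z_4},\phi_{z_5}$, to infer several coordinates from one probe. We must therefore check that the features used at later states are the same $e_i$, so they add no new directions. We must also check that the known coordinate $e_d$ and the ball constraint $\confset_0$ leak no information about $I$ beyond the queried coordinates. Symmetry of the prior under permutations of $[d-1]$ handles the latter.

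We also need the rounding in "uniform on $[d-1]$" to yield the constant $1/4$. We would verify this via $\E[\min\{N-1,T\}]\ge\frac12\min\{T,d-1\}$ when $d\ge2$, possibly after adjusting the constant by counting the final failed query as well.
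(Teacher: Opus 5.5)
Your instance and prior are essentially the paper's. You use coordinate features $e_1,\dots,e_{d-1}$ on the doors, an extra coordinate $e_d$ for the continuation, $I$ uniform with $\maskpar^\star\propto e_I-\sum_{j\ne I}e_j+c\,e_d$, and identical feedback on every failed probe. The gap is the safe action $\action_0$ you add, which costs you the constant. With it you can only charge $\horizon/2$ per unsuccessful episode. Write $n:=d-1$ for the number of doors and $m:=\min\{T,n\}$. The expected number of unsuccessful episodes is only $\sum_{t=1}^{m}\frac{n-t}{n}=\frac{m(2n-m-1)}{2n}$, which equals $\frac{n-1}{2}$ for $T\ge n$. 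So your argument gives $\frac{\horizon}{4}(d-2)$ rather than $\frac{\horizon}{4}(d-1)$. The inequality you plan to verify, $\E[\min\{N-1,T\}]\ge\frac12\min\{T,d-1\}$ for $N$ uniform on $[d-1]$, is false in exactly this regime. When $T\ge d-1$ the left side equals $\frac{d-2}{2}$, e.g.\ $\frac12$ against $1$ at $d=3$; it holds only for $T\le d-2$.

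The simplest repair is to delete $\action_0$. The bound is over all algorithms, so there is no optimism to protect, and a safe arm only helps the learner. Without it every unsuccessful episode costs the full $\horizon$, and $\horizon\frac{m(2n-m-1)}{2n}\ge\horizon\frac{m(n-1)}{2n}\ge\frac{\horizon m}{4}$ for $n\ge2$, which is the paper's computation.

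Keeping $\action_0$ can also work. You would charge failed probes $\horizon$ and $\action_0$-episodes $\horizon/2$ separately, using that the latter reveal nothing. But you must then specify that every action on the $\action_0$ branch has feature $e_d$. Otherwise a learner can collect $1/2$ per step on that branch and probe a door late in the episode. Then ``the first action decides the episode'' fails, and a failed probe costs only about $\horizon/2$.

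Three smaller points:
\begin{itemize}
\item The number of episodes needed to find $I$ is not uniform on $[d-1]$; it is only stochastically larger than a uniform variable, since $\action_0$-plays and repeated probes waste episodes.
\item The rigorous form of your symmetry argument is the paper's counterfactual all-failure action sequence $(k_1,k_2,\dots)$. It is independent of $I$ and gives $\Pb(I\notin\{k_1,\dots,k_t\})\ge\frac{n-t}{n}$ directly.
\item At $d=2$ both your instance and the paper's have a single door and hence zero regret. The argument therefore needs $n=d-1\ge2$, as the paper's proof implicitly assumes.
\end{itemize}
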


\begin{proof}
    Consider the masked MDP in \cref{fig:doors}. We take the trivial feature map, so that $d=2\abs{\actionspace}$. Under the best possible mask, all actions in $s_1$ go to $s_2$ with reward $1$. Under the real mask, only one of the actions is valid in $s_1$. Any algorithm has no choice but to take actions at random in $s_1$ until $s_2$ is reached, which incurs regret $\horizon$ each time. The bound follows from expected number of trials, see Appendix \ref{proof:lowerb}.
\end{proof}

\begin{figure}[ht]
    \vspace{0pt}
    \centering
    \includegraphics[width=.5\linewidth]{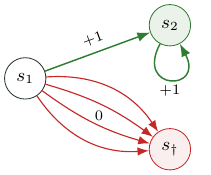}
    \captionof{figure}{The MDP in the proof of \cref{thm: lower bound for mask learning}.}
    \label{fig:doors}
\end{figure}

\noindent
In an unmasked MDP, optimistic algorithms offer a simple-to-analyse class of minimax-optimal algorithms \citep{auer2006logarithmic,jaksch2010near,UCBVI}. We now introduce their analogue in the masked setting, which we call \emph{version-space optimism} (VSO). Since the mask is observed noiselessly, VSO is more closely related to the work of \citet{munos2011optimistic} %
than to these standard RL algorithms. It maintains all mask models still compatible with the observations and acts according to the most optimistic one, thereby exploiting only constraints that have already been ruled out by the data. Note that this algorithm is computationally feasible, since the space of masks to search is at most $d$-dimensional and the MDPs are linear. It is also possible to implement it using Extended Value Iteration \citep{jaksch2010near}, see Appendix \ref{app:EVI}.

\begin{algorithm}[ht]
\caption{Version-space optimism (VSO)}
\label{alg:vso}
\begin{algorithmic}[1]
\STATE \textbf{Input:} initial version space $\versionstate_1:=\confset_0$, initial law $\startlaw$.
\FOR{$t=1,2,\dots$}
    \STATE Set $\pi_t\in\argmax_{\pi\in\markovpolicies}\max_{m\in\versionstate_t}J^{ m}_\pi(1,\startlaw)$.
    \STATE Play $\pi_t$ for the episode, collect $(Z_h)_{h\in[H]}:=(S^{\pi_t}_h,\pi_{t,h}(S^{\pi_t}_h))_{h=1}^H$. 
    \STATE $\observedpairs_t\gets \observedpairs_{t-1}\cup \{(Z_h)_{h=1}^H\}$, and $\versionstate_{t+1}\gets\versionstate(\observedpairs_t)$.
\ENDFOR
\end{algorithmic}
\end{algorithm}

\begin{proposition}\label{prop: VSO regret}
    Let $\confset_0=\Bc_2(0,1)$ be the Euclidean ball. For any $(H,T,\phi)$, the version-space optimism algorithm (\cref{alg:vso}) satisfies
    \begin{equation*}
        \sup_{\maskpar^\star\in\confset_0}\ \E_{\maskpar^\star}\bigl[\cumulativeregret(T)\bigr]\ \le\ \min\left\{T,\undecidedrank{\confset_0}\right\}\horizon.
    \end{equation*}
\end{proposition}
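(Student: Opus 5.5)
The plan is a standard optimism argument adapted to noiseless feedback. Throughout, the dynamics $(\transition,\reward)$ are known and only the mask is learnt. The per-episode regret is bounded by $\horizon$, since rewards lie in $[0,1]$. The total regret is therefore at most $\horizon$ times the number of episodes with positive regret. It suffices to show that at most $\dim\featurematrix(\undecided(\confset_0))$ episodes can have nonzero regret; together with the trivial bound $T$, this gives the $\min$.

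\textbf{Step 1 (optimism).} Since $\maskpar^\star\in\versionstate_t$ for all $t$, we have $\max_{\maskpar\in\versionstate_t}\max_\pi J^{\maskpar}_\pi(1,\startlaw)\ge V^{\maskpar^\star}(1,\startlaw)$. Write $\widetilde{\maskpar}_t$ for the optimistic mask selected together with $\pi_t$. The regret of episode $t$ is then at most $J^{\widetilde{\maskpar}_t}_{\pi_t}(1,\startlaw)-J^{\maskpar^\star}_{\pi_t}(1,\startlaw)$.

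\textbf{Step 2 (discrepancy implies an informative visit).} Couple the two chains under $\pi_t$ until the first time they see a pair where $\widetilde{\maskpar}_t$ and $\maskpar^\star$ disagree; since they share $\transition$ and $\reward$, they coincide up to that point. If the regret is positive, then with positive probability the trajectory under $\maskpar^\star$ reaches a pair $Z_h$ where the two masks disagree. Both masks lie in $\versionstate_t$, so such a pair is in $\undecided(\versionstate_t)$. More strongly, since two elements of the version space give it different signs, $\feature(Z_h)\notin\spn\featurematrix(\observedpairs_{t-1})$: any feature in that span has its score fixed by $y$. Playing it strictly increases $\dim\spn\featurematrix(\observedpairs_t)$.

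\textbf{Step 3 (counting).} The dimension $\dim\spn\featurematrix(\observedpairs_t)$ is nondecreasing and can only grow through features of $\undecided(\confset_0)$. Certified pairs under $\confset_0$ never witness a disagreement, and $\undecided(\versionstate_t)\subseteq\undecided(\confset_0)$. Hence the dimension can increase at most $\dim\featurematrix(\undecided(\confset_0))$ times in total, counting only directions within the span of $\featurematrix(\undecided(\confset_0))$.

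\textbf{Main obstacle.} Step 2 only guarantees an informative visit \emph{with positive probability}, not surely, while the claimed bound has constant $1$ rather than $1/p_{\min}$. I would first try a pathwise decomposition. Write the episode regret as $\E[\text{realised gap}]$, where the realised gap is $\sum_h(\reward_{\widetilde{\maskpar}_t}-\reward_{\maskpar^\star})$ along the coupled trajectory plus the optimistic continuation lost at absorption. This gap vanishes on trajectories that meet no disagreeing pair and is at most $\horizon$ otherwise. The expected regret is then bounded by $\horizon\,\E[\1\{\text{episode }t\text{ increases the rank}\}]$. Summing over $t$ and using that the rank increments are pathwise at most $\dim\featurematrix(\undecided(\confset_0))$ in total yields the bound in expectation. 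The subtle point is that the optimistic value on a non-disagreeing path equals the realised value, which follows from the coupling in Step 2. I would verify this with the Bellman recursion (performance-difference lemma) for the two masked MDPs: their local differences $\reward_{\widetilde{\maskpar}_t}-\reward_{\maskpar^\star}$ and $\transition_{\widetilde{\maskpar}_t}-\transition_{\maskpar^\star}$ are supported on disagreeing pairs, and each such term is bounded by $\horizon$.
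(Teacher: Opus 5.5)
Your proof is correct and follows the paper's argument. Optimism over $\versionstate_t$ confines the regret of episode $t$ to trajectories that visit a pair where the optimistic and true masks disagree; such a pair is uncertified and has its feature outside the span of past observations; and only $\dim\featurematrix(\undecided(\confset_0))$ such new directions exist. Your coupling bound $V^{\maskpar^\star}(1,\startlaw)-J^{\maskpar^\star}_{\pi_t}(1,\startlaw)\le\horizon\,\Pb(\text{a disagreeing pair is visited}\mid\episodicfiltration_t)$ is what makes the paper's pathwise ``free or informative'' dichotomy rigorous under stochastic transitions, and it correctly replaces your opening reduction to counting positive-regret episodes, which does not hold as stated.
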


\begin{proof}
    By optimism over the version space, either episodes are free ($\pi_t$ coincides with the optimal policy of the true mask) or they land on an element of $\undecided(\versionstate_t)$, which resolves a direction of $\undecided(\confset_0)$. Since there are at most $\undecidedrank{\confset_0}$ directions to resolve, the claim follows.
\end{proof}

\Cref{thm: lower bound for mask learning,prop: VSO regret} together show that the minimax regret rate for the problem is no different from the rate one would get if one could query each state-action pair directly. In this sense, it is of very limited interest, and the heart of the problem lies in understanding how to leverage the dynamics of the MDP to query the version space at minimal regret. 

\subsection{Separation theorem}\label{subsec: separation}

\Cref{thm:separation} will now argue that mask learning can be separated from the problem of learning the dynamics and rewards of the underlying linear MDP, which motivates our choice to study the mask learning problem in isolation. Suppose we have an optimistic learning algorithm $\As$ for linear MDPs that is capable of achieving, over $T$ episodes, a high-probability regret bound of $B_{\mathrm{lin}}(T,H,\delta)$ on an MDP with horizon $H$. We denote the trajectory of $\As$ in episode $t$ by $(S_h^{\As,t},A_h^{\As,t})_{h=1}^\horizon$ and denote by $(B_h^{\As,t})_{h=1}^\horizon$ corresponding step regret upper bounds \citep[see, \eg{},][]{UCBVI}. 

\begin{restatable}{theorem}{separation}\label{thm:separation}
    Consider a \emph{reasonable} algorithm that is jointly optimistic in the mask and the linear MDP. Let $p_t$ be the probability, conditional on the information available before episode $t$, that an action is attempted that is invalid under the true mask.
    If the analysis of $\As$ holds under the joint interaction\footnote{\textit{E.g.}, discarding this data and adapting regret time.}, with 
        \[ \Pb\!\left(\sum_{t=1}^T\beta_t\le B_{\mathrm{lin}}(T,H,\delta/2)\right)\ge 1-\delta/2. \]
     Then
    \[
       \begin{aligned}\regret(T)\le{}& B_{\mathrm{lin}}(T,H,\delta/2)\\
       &+\horizon\left(\min\{T,r_0\}+\sqrt{\frac{T}{2}\log\frac{2}{\delta}}\right)\end{aligned}
    \]
    with probability at least $1-\delta$, wherein $r_0:=\dim\featurematrix(\undecided(\confset_0))$. A high-probability bonus bound for the learner on an unmasked MDP alone does not imply the joint-interaction assumption above.
\end{restatable}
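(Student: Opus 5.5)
The plan is to split each episode's regret according to whether an invalid action is attempted, and then control the two parts separately. Let $E_t$ be the event that, during episode $t$, the algorithm attempts an action that is invalid under the true mask, so $p_t=\Pb(E_t\mid\episodicfiltration_{t-1})$. Joint optimism means the index used to pick $\pi_t$ is at least $\sup_\pi J_\pi(1,\startlaw)$; this holds on the high-probability event from $\As$'s analysis, intersected with the deterministic fact $\maskpar^\star\in\versionstate_t$. On $E_t^c$ the trajectory coincides with a trajectory of the underlying unmasked linear MDP along pairs that are valid. The step bonuses $\beta_t$, with $\beta_t=\sum_h B_h^{\As,t}$, therefore upper bound the optimistic gap, and we get
\[
\sup_\pi J_\pi(1,\startlaw)-J_{\pi_t}(1,\startlaw)\le \E[\beta_t\mid\episodicfiltration_{t-1}]+\horizon p_t .
\]
Here the $\horizon p_t$ term pays at most $\horizon$ for episodes truncated by the cemetery state. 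The first step is to make this decomposition precise by writing the value difference through the simulation lemma on the masked MDP. The optimistic mask and the true mask agree on $\certvalid(\versionstate_t)\cup\certinvalid(\versionstate_t)$, and a reasonable algorithm never plays $\certinvalid$. Hence the two models differ only at uncertified pairs, and reaching such a pair is exactly the event that contributes to $p_t$ or to learning.

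Summing over $t$, the bonus term is bounded by $B_{\mathrm{lin}}(T,H,\delta/2)$ with probability $1-\delta/2$, directly from the stated joint-interaction hypothesis. For the mask term, write $\sum_t p_t=\sum_t \1_{E_t}+\sum_t(p_t-\1_{E_t})$. The second sum is a martingale difference sequence with increments in $[-1,1]$, and more precisely each increment lies in an interval of length $1$. Azuma--Hoeffding therefore gives $\sum_t (p_t-\1_{E_t})\le\sqrt{(T/2)\log(2/\delta)}$ with probability $1-\delta/2$. The first sum is deterministic. Each occurrence of $E_t$ means an uncertified pair was played, since a reasonable algorithm never plays a certified-invalid one. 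By the argument preceding \cref{thm: lower bound for mask learning} and in \cref{prop: VSO regret}, observing its exact score strictly reduces $\dim\featurematrix(\undecided(\versionstate_t))$, or at least adds a feature direction of $\undecided(\confset_0)$ not in the span of $\Phi_{\data_{t}}$. Such a reduction can happen at most $r_0$ times, so $\sum_t\1_{E_t}\le\min\{T,r_0\}$. A union bound over the two events then yields the claimed inequality with probability $1-\delta$.

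The main obstacle is the first step: showing that the bonuses $\beta_t$ still dominate the optimism gap once the planner also maximises over masks. Two issues arise. First, the optimistic mask may route the policy through uncertified pairs, so the comparison must be done on the masked model with the optimistic mask, without changing $\As$'s confidence sets. Second, data from truncated episodes alters $\As$'s filtration. The footnoted device handles the second issue: discard truncated episodes from $\As$'s data and reindex its time, so its guarantee applies to the subsequence of clean episodes. This is also why the final sentence of the theorem holds. The joint-interaction hypothesis is genuinely needed, because an unmasked guarantee says nothing about this reindexed, adaptively filtered process. To justify that sentence I would give a short counterexample: an adversarially correlated choice of which episodes get truncated can bias $\As$'s sampled transitions.
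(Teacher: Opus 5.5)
Your proposal is essentially the paper's proof: the episode-wise decomposition $\Delta_t\le\beta_t+\horizon p_t$ (which the paper likewise asserts as standard rather than derives, and it gives no counterexample for the final non-implication sentence either), then one-sided Azuma--Hoeffding on $\1_{E_t}-p_t$ with unit-range increments, then the pathwise count $\sum_t\1_{E_t}\le\min\{T,r_0\}$ through a dimension potential, and finally a union bound. Two small repairs are needed: first, drop the first disjunct in your counting step, since $\dim\featurematrix(\undecided(\versionstate_t))$ need not decrease (in $\Rb^2$ with $\confset_0$ the unit disc and features $e_1,e_2,e_1+e_2$, observing a small negative score on $e_1$ leaves the other two pairs uncertified and still spanning the plane), and argue instead that a truly invalid pair you play cannot have its feature in $\spn\Phi_{\data_t}\subseteq\decideddirections(\versionstate_t)$, because its negative score would then be known and the pair certified invalid, so each mistake raises $\dim\bigl(\spn\Phi_{\data_t}\cap\spn\featurematrix(\undecided(\confset_0))\bigr)\le r_0$ by one; second, take $\beta_t$ to be the conditional expectation of the bonuses before truncation, as the paper does, so the hypothesis bounds $\sum_t\beta_t$ directly without an extra martingale step.
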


We defer the proof to Appendix~\ref{app:proof-separation}. Let us sketch it briefly. On episodes with no exploration we use the existing bounds of $\As$, trajectorially. On exploratory episodes, we use the usual regret decomposition
    \[ \Delta_t\le \beta_t+\horizon p_t,\qquad
       \beta_t:=\E\!\left[\sum_{h<\tau_t\vee \horizon}B_h^{\As,t}\,\middle|\,\mathcal F_t\right], \]
    where $\Delta_t$ is episode $t$'s conditional pseudo-regret and $\tau_t$ is its first invalid-action time.

    We bound the resulting accumulation of exploration using a martingale argument between samples and $p_t$, then show decrease of the dimensionality potential
\[
 \undecidedpotential{\confset_0}(\versionstate)
 :=
 \dim\featurematrix\bigl(\undecided(\confset_0)\bigr)
 -
 \dim\!\left(
   \featurematrix\bigl(\undecided(\confset_0)\bigr)
   \cap\decideddirections(\versionstate)
 \right),
\]
where $\decideddirections(\versionstate):=\{x\in\Rb^{d}: \forall (\maskpar_1, \maskpar_2) \in \versionstate^{2} , \maskpar_1^\top x= \maskpar_2^\top x\}$ is the subspace whose \emph{value} is known on $\versionstate$.

As a consequence of \cref{thm: lower bound for mask learning,thm:separation}, it would appear that the problem is trivially solved. Instead, it is more accurate to observe that minimax analysis is not the correct lens to analyse the problem, which, indeed, is not statistical. Rather, the core of the problem, and the next section, lies in how to adapt exploration to the interaction of the dynamics with the feature-space of each instance.

\section{Planned exploration}\label{sec: instance dep}

To understand the shortcomings of VSO on non-pathological instances, let us introduce one which is intuitively cheap to explore, but where optimistic algorithms fail. This instance (see \cref{fig: door + chain}) is an extension of \cref{fig:doors}. Of the $\abs{\actionspace}$ actions, only one, say $a_\star$ leads to $s_2$, as before. But now, instead of all other actions being invalid (masked), some proportion $\delta\in(0,1)$ of them leads (with $0$ reward) to a chain\footnote{\Cref{fig: door + chain} shows only one such action, for  legibility.} $(s'_i)_{i=2}^\horizon$ of states which  transition to each other with reward $1$. In the final state of the chain, $s'_H$, $a_\star$ yields reward $1$, and all other actions $0$, and we take any feature map satisfying 
\[\phi(s_1,a_i)=\phi(s_\horizon',v_i),\]
for all $i\in[d]$, $d=\Oc(\abs{\actionspace})$.
Thus, the rewards obtainable are: $0$, if the first action is invalid, $H-2$ if one takes $a\neq a_\star$ at the end of the chain, $H-1$ if one takes $a_\star$ instead, and finally $H$ if one takes $a_\star$ from the start.

On this instance, any algorithm must still begin by sampling doors at random. However, VSO believes that all untried actions lead to $s_2$ directly, so they have value $\horizon$ while the chain always has value $\horizon-1$ at most. Thus, VSO never takes the chain, incurring a regret of order $\Oc(\horizon (d-1)(1-\delta)/2)$ as before. Since a chain can be found (on average) in $\Oc((1-\delta)H)$ tries, and then any further exploration costs at most $\Oc(\delta K)$, hence it is possible to move from a multiplicative dependence in $(\horizon,d)$ to an additive one. 

\begin{figure}[ht]
    \centering
    \includegraphics[width=.5\linewidth]{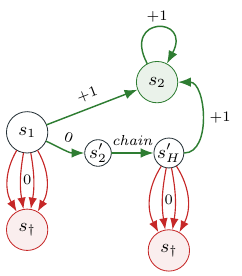}
    \caption{The second example MDP.}\label{fig: door + chain}
\end{figure}

This type of exploration problem is fundamental
 in control problems, and has been documented in the literature of MDPs \citep[see, for instance, the forthcoming work of][]{boone2026towards}.
 In order to address it, we introduce a planning-based algorithm which can be readily interpreted as an analogue of the information-directed sampling (IDS) algorithm of \citeauthor{russo2014information} (\citeyear{russo2014information}, see also \citeauthor{hao2022idrl}, \citeyear{hao2022idrl}).
 Let us first introduce the algorithm, and then we will compare it to both VSO and IDS.

\newcommand{\targetpolicy}{\pi}
\newcommand{\targettime}{\tau_{\targetpolicy}}
In order to determine which are the best exploratory paths through the masked MDP, let us start by specifying a generic version space $\versionstate$, and consider a set of target state-action pairs $\undecided(\versionstate)\subseteq\target$. Given any (non-stationary, randomised) policy $\targetpolicy$, let $\targettime$ be the first time that the trajectory $(S^{\targetpolicy}_h, \targetpolicy_h(S^{\targetpolicy}_h))_{h=1}^\horizon$ generated by $\targetpolicy$ reaches a state-action pair in $\target$. 

Consider the family of auxiliary MDPs $(\MDP_\lambda)_{\lambda\ge0}$ with the same dynamics and rewards as the original MDP, except that actions are limited to $\target\cup\certvalid(\versionstate)$ except that episodes terminate early if $\target$ is reached, yielding a fixed reward of $\lambda$. The value functional of this MDP is readily seen to be
\begin{multline}\label{eq: definition of lambda MDP value functional}
J^{\lambda,\target}_{\versionstate,\cdot}:(\targetpolicy, t, \state)\mapsto \E\bigg[\sum_{h=t}^{(\targettime - 1)\wedge \horizon}\reward(S^{\targetpolicy}_h, \targetpolicy_h(S^{\targetpolicy}_h))\\ + \lambda \1\{\targettime\le \horizon\}\mid S^{\targetpolicy}_t=\state\bigg].
\end{multline}

Via this auxiliary MDP, it is straightforward to obtain a dynamic programming representation of $J^{\lambda,\target}_{\versionstate,\cdot}$, or, in fact, of the associated value function $V^{\lambda,\target}_{\versionstate}(h,\state):= \sup_{\targetpolicy}J^{\lambda,\target}_{\versionstate,\targetpolicy}(h, \state)$, which is given by \cref{prop: dynamic programming for lambda MDP}.

\begin{restatable}{proposition}{lambdadynprog}\label{prop: dynamic programming for lambda MDP}
    The optimal value functions $V^{\lambda,\target}_{\versionstate}$ satisfy the following family, indexed by $\lambda\ge0$ and $\undecided(\versionstate)\subseteq\target$, of dynamic programming equations
    \begin{align}
        V^{\lambda,\target}_{\versionstate}(h, \state) &= \lambda\1_{\{\exists a:\, (\state, a) \in \target\}}\vee \max_{a \in \certvalid(\versionstate_t)} \bigg\{ \reward(\state, a)\notag \\&\hspace{1em}\left.+ \int V^{\lambda,\target}_{\versionstate}(h+1, s') P(\de s'|\state, a)\right\}\hspace{-1em}
        \label{eq: Dynamic Programming for Lambda MDP}
    \end{align}
     with terminal conditions $V^{\lambda,\target}_{\versionstate}(\horizon+1,\cdot)\equiv0$. Moreover, a pointwise maximiser of the right-hand side of \eqref{eq: Dynamic Programming for Lambda MDP} (understood as taking $a\in \target$ if the $\lambda$-term is dominant) is an optimal policy for the auxiliary MDP $\MDP_\lambda$.
\end{restatable}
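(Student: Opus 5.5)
The plan is to treat $\MDP_\lambda$ as an ordinary finite-horizon MDP with an absorbing terminal state and read off its Bellman equation. In state $\state$ the agent has two kinds of choices. It may pick $a$ with $(\state,a)\in\target$, which ends the episode with reward $\lambda$. Or it may pick $a\in\certvalid(\versionstate)\setminus\target$, which pays $\reward(\state,a)$ and moves to $s'\sim P(\cdot\mid\state,a)$; on certified valid pairs the true mask equals one, so no cemetery mass arises. I will encode early termination by adding an absorbing zero-reward state $s_\lambda$: stopping pays $\lambda$ and jumps to $s_\lambda$. The first step is to check that, for every randomised non-stationary $\targetpolicy$, the expected total reward of this augmented MDP from $(t,\state)$ equals the functional \eqref{eq: definition of lambda MDP value functional}. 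The truncation at $(\targettime-1)\wedge\horizon$ matches absorption in $s_\lambda$, and the indicator $\1\{\targettime\le\horizon\}$ matches the single jump into $s_\lambda$.

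The second step applies the standard finite-horizon dynamic programming theorem \citep{puterman1994markov} to this finite-action, bounded-reward MDP. For general state spaces one also needs measurable selection, which is automatic when $\actionspace$ is finite. The theorem gives three things: $V(h,\cdot)=\max_a\{\text{one-step reward}+\int V(h+1,\cdot)\,\de P\}$ with $V(\horizon+1,\cdot)\equiv0$; the supremum over randomised history-dependent policies equals the supremum over deterministic Markov ones; and any pointwise maximiser is optimal. The maximum splits into two groups. The stopping actions contribute $\lambda$, and only when some $(\state,a)\in\target$ exists; this gives the term $\lambda\1_{\{\exists a:(\state,a)\in\target\}}$. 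The continuing actions contribute $\reward(\state,a)+\int V(h+1,s')P(\de s'\mid\state,a)$. Taking the maximum of the two groups gives \eqref{eq: Dynamic Programming for Lambda MDP}, and a tie-broken pointwise maximiser is optimal. I would prove optimality by backward induction on $h$, comparing any $\targetpolicy$ with the greedy policy through the tower property, and using $\lambda\ge0$ so that the indicator term is never harmful.

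The main obstacle is the bookkeeping between the index sets. The statement maximises over $a\in\certvalid(\versionstate)$, whereas the model allows $\target\cup\certvalid(\versionstate)$, and pairs in $\target\cap\certvalid(\versionstate)$ terminate the episode. I will resolve this by reading every pair in $\target$ as a stopping action. Then the continuation maximum should run over $\certvalid(\versionstate)\setminus\target$, and I would argue this costs nothing in the inequality direction that matters.

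That argument goes as follows. If a pair $(\state,a)\in\target\cap\certvalid(\versionstate)$ were instead treated as continuing, the rest of the trajectory would still have to terminate at the first hit of $\target$. So I expect to state the convention explicitly: a pair in $\target$ always triggers termination. I also need the standing assumption $\undecided(\versionstate)\subseteq\target$, because it guarantees that every admissible non-target action is certified valid, so transitions never reach the cemetery. The remaining step is a routine check that the $\vee$ formulation with this convention gives the same value as the maximum over the full admissible set.
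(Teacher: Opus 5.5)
Your framework matches the paper's. Its proof identifies the recursion as value iteration for a finite-horizon MDP with known dynamics and an absorbing terminal state paying $\lambda$; your augmented state $s_\lambda$ with backward induction is a more careful version of that.

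The gap is in the one non-routine step you flag. You are right that the main-text definition of $\tau_\pi$ (first hitting time of $\target$, forced termination) is in tension with the displayed equation. Under that convention, however, the continuation maximum must run over $\certvalid(\versionstate)\setminus\target$, and this is \emph{not} equivalent to the displayed maximum over all of $\certvalid(\versionstate)$. Your claim that the restriction ``costs nothing'' is false, and the sentence you offer in its support never compares the two quantities.

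A counterexample: take $\lambda=0$ and a state $s_1$ whose only admissible action $a_1$ lies in $\target\cap\certvalid(\versionstate)$, with $\reward(s_1,a_1)=1$ and a deterministic move to $s_2$. At $s_2$, a certified non-target action loops with reward $1$. With forced termination the value at $(1,s_1)$ is $\lambda=0$, while the displayed recursion gives $0\vee\bigl(1+(\horizon-1)\bigr)=\horizon$. Its maximiser ``continues'' through a target pair, which is not a policy of your model, so the optimality claim fails too. Such overlaps do occur: the appendix uses $\target=\informative(\versionstate)$, which can contain certified valid pairs.

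The paper obtains the displayed equation through a different model. Every state keeps \emph{all} its certified actions with their rewards and transitions. Absorption with reward $\lambda$ is an extra, optional action, available whenever some $(\state,a)\in\target$ exists. Thus $\tau_\pi$ is the first time the policy \emph{elects} to absorb, not the first time the trajectory meets $\target$. In that model the displayed maximum over $\certvalid(\versionstate)$ is exactly the Bellman operator, and your backward induction goes through unchanged.

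So you must choose explicitly between two options:
\begin{itemize}
\item adopt optional stopping, which proves the statement as written;
\item keep forced termination and prove the amended equation with $\certvalid(\versionstate)\setminus\target$.
\end{itemize}
The two agree for $\target=\undecided(\versionstate)$, the target of \cref{alg:lambdaplanning}, because $\undecided(\versionstate)\cap\certvalid(\versionstate)=\emptyset$.

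Two smaller points. First, $\undecided(\versionstate)\subseteq\target$ is not what keeps you out of the cemetery: admissible non-target actions lie in $\certvalid(\versionstate)$ by construction. Second, the spurious option $0$ when no target action exists is harmless because rewards are nonnegative, not because $\lambda\ge0$.
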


The dynamic programming (Bellman) equation \eqref{eq: Dynamic Programming for Lambda MDP} might appear highly irregular, and it is indeed uncommon in reinforcement learning. It is the standard form for a mixed control and stopping (a.k.a.\ free-boundary) problem, a well-studied class of stochastic control problems in the mathematical finance literature see, \eg{}, \citet{pham2009continuous}. Compare also pure stopping problems with a dynamic programming equation, such as prophet problems \citep{hill1992survey}.

In any case, for finite-dimensional MDPs, resolution of this equation is numerically straightforward, and we can thus use this solution to compute an optimal policy which explores through the shaped reward of \eqref{eq: definition of lambda MDP value functional}, which targets at each step $\Ts_t:=\undecided(\versionstate_t)$.\footnote{In the appendix, we generalise to bigger sets.}

\begin{algorithm}[ht]
\caption{\texorpdfstring{$\lambda$}{lambda}-planning}
\label{alg:lambdaplanning}
\begin{algorithmic}[1]
\STATE \textbf{Input:} initial version space $\versionstate_1:=\confset_0$, initial law $\startlaw$.
\FOR{$t=1,2,\dots$}
    \STATE $\realprice_t = \realprice_{\versionstate_t}(\undecided(\versionstate_t))$
    \STATE Compute $\targetpolicy_{t}\in\argmax_{\targetpolicy}J^{\realprice_t,\undecided(\versionstate_t)}_{\versionstate_t,\targetpolicy}(\startlaw)$.
    \STATE Play $\pi_t$ for the episode, collect $(Z_h)_{h\in[H]}:=(S^{\pi_t}_h,\pi_{t,h}(S^{\pi_t}_h))_{h=1}^H$. 
    \STATE Update $\observedpairs_t\gets \observedpairs_{t-1}\cup \{(Z_h)_{h=1}^H\}$, and $\versionstate_{t+1}\gets\versionstate(\observedpairs_t)$.
\ENDFOR
\end{algorithmic}
\end{algorithm}

Regret analysis of \cref{alg:lambdaplanning}, see  \cref{prop: lambda planning regret}, bounds the regret of $\lambda$-planning in terms of the \emph{clearing price} for a version space $\versionstate$ and a target $\target$, that is
\[
\realprice_\versionstate(\target) := \min\left\{\lambda \ge 0 : V^{\lambda,\target}_{\versionstate}(1,\startlaw) \ge \max_{\mask\in \versionstate} V^{\mask}(1,\startlaw)\right\} 
\]
(recall that $\startlaw$ is the initial state distribution).

\begin{restatable}{proposition}{lambdaplanningregret}\label{prop: lambda planning regret}
For any $u\in\confset_0$, and any $T\ge1$, the $\lambda$-planning algorithm (\cref{alg:lambdaplanning})  satisfies
\[
\E_{\maskpar}\bigl[\cumulativeregret(T)\bigr]\ \le \E_{\maskpar}\left[\sum_{t=1}^T \realprice_t(\potential_t-\potential_{t+1})\right]
\]
wherein $\potential_t:= \undecidedpotential{\confset_0}(\versionstate_t)
$.
\end{restatable}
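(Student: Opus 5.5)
The plan is a per-episode bound followed by summation over $t$. I will show that, conditionally on $\episodicfiltration_t$ (the history before episode $t$),
\[
\Delta_t := V^{\mask_\star}(1,\startlaw)-J^{\mask_\star}_{\pi_t}(1,\startlaw)\ \le\ \realprice_t\,\E\bigl[\potential_t-\potential_{t+1}\mid\episodicfiltration_t\bigr].
\]
Taking expectations and using the tower rule then gives the statement, since $\cumulativeregret(T)=\sum_t\Delta_t$ is a pseudo-regret.

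\textbf{Step 1: lower-bound the value collected by $\pi_t$.} Write $\tau=\targettime$ for $\pi=\pi_t$, the first hitting time of $\undecided(\versionstate_t)$. Before $\tau$, $\pi_t$ only plays pairs in $\certvalid(\versionstate_t)$. These are truly valid because $\maskpar_\star\in\versionstate_t$, so on $[1,\tau-1]$ the true masked MDP and the auxiliary MDP coincide in both rewards and transitions. Rewards are nonnegative, so
\[
J^{\mask_\star}_{\pi_t}(1,\startlaw)\ \ge\ \E\Bigl[\sum_{h<\tau\wedge(\horizon+1)}\reward(Z_h)\Bigr].
\]
Adding and subtracting $\realprice_t\Pb(\tau\le\horizon)$, and using that $\pi_t$ is optimal for the auxiliary MDP (\cref{prop: dynamic programming for lambda MDP}), gives
\[
J^{\mask_\star}_{\pi_t}(1,\startlaw)\ \ge\ V^{\realprice_t,\undecided(\versionstate_t)}_{\versionstate_t}(1,\startlaw)-\realprice_t\Pb(\tau\le\horizon).
\]
By the definition of the clearing price, the first term is at least $\max_{\mask\in\versionstate_t}V^{\mask}(1,\startlaw)\ge V^{\mask_\star}(1,\startlaw)$. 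Continuity in $\lambda$ ensures the minimum is attained; the value is monotone and Lipschitz in $\lambda$. Hence
\[
\Delta_t\le\realprice_t\,\Pb(\tau\le\horizon\mid\episodicfiltration_t).
\]

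\textbf{Step 2: the main obstacle.} I must show that $\{\tau\le\horizon\}$ implies $\potential_{t+1}\le\potential_t-1$, so that $\Pb(\tau\le\horizon\mid\episodicfiltration_t)\le\E[\potential_t-\potential_{t+1}\mid\episodicfiltration_t]$.

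On $\{\tau\le\horizon\}$ the learner plays some $z\in\undecided(\versionstate_t)$ and observes $\feature(z)^\top\maskpar_\star$ exactly. Since $z$ is uncertified, the map $\maskpar\mapsto\feature(z)^\top\maskpar$ is not constant on $\versionstate_t$. If it were constant it would equal a value $c$ with $\inf=\sup=c$, and $c=0$ is excluded by nondegeneracy, i.e.\ $\maskpar_\star$ off the boundary, which I would state or handle via the $\sgn$ convention. So $\feature(z)\notin\decideddirections(\versionstate_t)$.

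I also have $\feature(z)\in\feature(\undecided(\confset_0))$, because $\undecided(\versionstate_t)\subseteq\undecided(\confset_0)$ by monotonicity of version spaces. After observing $z$, $\feature(z)\in\decideddirections(\versionstate_{t+1})$. Moreover $\decideddirections$ is non-decreasing, since $\versionstate$ shrinks. Therefore the subspace $\feature(\undecided(\confset_0))\cap\decideddirections$ strictly grows in dimension, and $\potential$ drops by at least one.

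The delicate point is the intersection with $\confset_0$. $\decideddirections$ is defined through $\versionstate$, which is an affine set intersected with a ball, so I would check that the set of directions known on it is exactly the span of observed features (as long as $\maskpar_\star$ is interior) and in particular is a subspace. On $\{\tau>\horizon\}$ the potential is still non-increasing.

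\textbf{Step 3: sum.} Combining Steps 1 and 2, and noting that $\realprice_t$ is $\episodicfiltration_t$-measurable, gives
\[
\Delta_t\le\E\bigl[\realprice_t(\potential_t-\potential_{t+1})\mid\episodicfiltration_t\bigr].
\]
Summing over $t\le T$ and taking expectations yields the claim.
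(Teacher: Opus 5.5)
Your proof is correct and follows the paper's argument. It uses the same one-episode bound $\Delta_t\le\realprice_t\,\Pb(\tau\le\horizon\mid\episodicfiltration_t)$, obtained from optimality of $\pi_t$ in the auxiliary MDP and the clearing-price definition, then the same potential-drop argument and the tower rule; the paper runs the potential-drop step with the larger target $\informative(\versionstate_t)$ and a decomposition $\feature(z)=w+v$ with $w\in\decideddirections(\versionstate_t)$, of which yours is the case $w=0$. Your nondegeneracy caveat is exactly what the paper uses silently when it asserts $\informative(\versionstate)\supseteq\undecided(\versionstate)$, but your ``delicate point'' is moot: $\decideddirections(\versionstate)$ is always the orthogonal complement of $\spn(\versionstate-\versionstate)$, hence a subspace, so you never need to identify it with the span of observed features.
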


We defer the proof to \cref{app:price} for brevity. The high-level argument mirrors the proof of \cref{prop: VSO regret}, and no regret except that we can keep track of the regret of exploration episodes through $\realprice$ rather than upper-bounding by $H$ \eg{} when $\realprice=0$ we have $V^{0,\target}_{\versionstate}(1,\startlaw)
=
\sup_{\mask\in\versionstate} V^{\mask}(1,\startlaw)$ despite exploration.

Now, let us turn to the interpretation of \cref{alg:lambdaplanning} as a form of Information-Directed Sampling (IDS). Let us recall the initial set-up of IDS in stochastic bandits and reinforcement learning. Given a filtration $\episodicfiltration_t$ of information available before episode $t$, and an unknown parameter $\boldsymbol{\maskpar}$ with a given prior, IDS rests on the computation of the mutual information between the history of observations in episodes up to $t$, $\Hc_t$, and a target unknown variable $\chi$ (for us, $\maskpar^\star$). Accounting explicitly for the policy $\pi$ that generates the observations, we define
\[ \Ic_t^\pi(\chi,\Hc){:=}\mathrm{D}_{\mathrm{KL}}\left(\Pb_t^\pi((\chi,\Hc){\in} \cdot)\Vert \Pb_t^\pi(\chi{\in}\cdot)\otimes \Pb_t^\pi(\Hc{\in}\cdot)\right).\]
Here, $\Pb_t^\pi$ denotes the probability measure induced by policy $\pi$ and initial distribution $\startlaw$, conditional on the observed history $\Hc_{t-1}$. The random variable $\Hc$ is any history-valued variable whose restriction up to episode $t-1$ coincides with the observed history. IDS plays the policy 
\[\pi_t \in \argmin_{\pi}\frac{\Eb[( V^{\maskpar^*}(1,\startlaw) - J_\pi^{\maskpar^\star}(1,\startlaw))^2 \vert \Hc_{t-1}]}{\Ic_t^\pi(\maskpar^\star,\Hc)}\]
which minimises a (quasi-)regret-to-information ratio. 

In MDPs, \citet{hao2022idrl} introduced an alternate formulation, where the policy is chosen according to the penalised objective
\begin{align}
    \pi_t \in \argmax_{\pi}\Eb[J_\pi^{\maskpar^*}(1,\startlaw) \vert \Hc_{t-1}] + \lambda \Ic_t^\pi(\maskpar^\star,\Hc),\label{eq: IDS regularised Bao}
\end{align}
which can be recast as an auxiliary MDP with a suitably shaped reward; see Proposition 4.8 therein.

In this formulation, the structural correspondence between \eqref{eq: IDS regularised Bao} and \eqref{eq: definition of lambda MDP value functional} is immediately apparent. In our case, due to the absence of noise in the dynamics (uncertainty is only in the mask), information theory tools and Bayesian methods wither away and leave behind the likelihood of hitting a masked parameter, for which the best proxy is the entry probability into $\target$.
With this interpretation in mind, we can rewrite $\realprice_\versionstate(\target)$ in terms of a ratio, using \cref{lemma: clearing price as ratio}. 

\begin{restatable}{lemma}{clearingpriceratio}
\label{lemma: clearing price as ratio}
For any $\versionstate\subseteq\confset_0$ and $\undecided(\versionstate) \subseteq \target$ such as $V^{0,\target}_{\versionstate}(1,\startlaw) < \sup_{\mask\in\versionstate} V^{\mask}(1,\startlaw)$ the clearing price satisfies 
\begin{align}\label{eq: clearing price as ratio}
\realprice_\versionstate(\target) \!=\! \inf_{\pi} \frac{\sup\limits_{\mask\in\versionstate} V^{\mask}(1,\startlaw) - \Eb\left[\sum\limits_{h=1}^{\targettime - 1} r(S^\pi_h,\pi_h(S^\pi_h))\right]}{\Pb_{\versionstate}(\tau_\pi <H)}.
\end{align}
\end{restatable}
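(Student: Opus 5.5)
The plan is to use the fact that $\lambda\mapsto V^{\lambda,\target}_{\versionstate}(1,\startlaw)$ is a supremum of affine functions of $\lambda$. For every policy $\pi$ of the auxiliary MDP, write
\[
J^{\lambda,\target}_{\versionstate,\pi}(1,\startlaw)=a_\pi+\lambda\, p_\pi,
\]
where $a_\pi:=\Eb\bigl[\sum_{h=1}^{(\targettime-1)\wedge\horizon} r(S^\pi_h,\pi_h(S^\pi_h))\bigr]$ and $p_\pi:=\Pb_{\versionstate}(\targettime\le\horizon)$. This follows directly from \eqref{eq: definition of lambda MDP value functional}. I will read the event $\{\tau_\pi<H\}$ in \eqref{eq: clearing price as ratio} as the hitting event $\{\targettime\le\horizon\}$ of the definition, and set $M:=\sup_{\mask\in\versionstate}V^{\mask}(1,\startlaw)$. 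Then $V^{\lambda}:=\sup_\pi(a_\pi+\lambda p_\pi)$ is convex, nondecreasing and continuous in $\lambda$, and $\realprice_\versionstate(\target)=\min\{\lambda\ge0: V^\lambda\ge M\}$.

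The argument then reduces to a two-sided inequality between $\realprice$ and $\rho:=\inf_\pi (M-a_\pi)/p_\pi$, with the convention that the ratio is $+\infty$ when $p_\pi=0$.

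For $\realprice\le\rho$: take any $\pi$ with $p_\pi>0$ and set $\lambda_\pi:=(M-a_\pi)/p_\pi$. This gives $V^{\lambda_\pi}\ge a_\pi+\lambda_\pi p_\pi=M$. Moreover $\lambda_\pi\ge0$, since $a_\pi\le V^{0}<M$ by the standing hypothesis. Hence $\realprice\le\lambda_\pi$, and taking the infimum over $\pi$ gives $\realprice\le\rho$.

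For $\rho\le\realprice$: at $\lambda^\ast=\realprice$, continuity together with $V^0<M$ gives $V^{\lambda^\ast}=M$ and $\lambda^\ast>0$.
1. The auxiliary MDP is finite-horizon with finitely many actions, and \cref{prop: dynamic programming for lambda MDP} applies. So the supremum is attained by a Markov policy $\pi^\ast$ with $a_{\pi^\ast}+\lambda^\ast p_{\pi^\ast}=M$.
2. Suppose $p_{\pi^\ast}=0$. Then $a_{\pi^\ast}=M>V^0\ge a_{\pi^\ast}$, a contradiction. So $p_{\pi^\ast}>0$.
3. Consequently $(M-a_{\pi^\ast})/p_{\pi^\ast}=\lambda^\ast$, and therefore $\rho\le\realprice$.

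Two points are worth flagging. The first is that $\realprice$ is well defined, i.e.\ some $\lambda$ satisfies $V^\lambda\ge M$. This needs a policy with $p_\pi>0$, so that $V^\lambda\to\infty$. I would argue it from $V^0<M$: if every admissible policy avoided $\target\supseteq\undecided(\versionstate)$ almost surely, then the optimistic mask's optimal policy could be matched using certified-valid actions only, which would force $V^0\ge M$. If that argument proves awkward, I would simply note that when no such policy exists both sides of \eqref{eq: clearing price as ratio} equal $+\infty$.

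The second point, which I expect to be the main obstacle, is reconciling the conventions. The sum in \eqref{eq: clearing price as ratio} runs to $\targettime-1$ without truncation at $\horizon$. It must be read with $\targettime=\horizon+1$ when $\target$ is never hit, so that it equals $a_\pi$. Likewise, the denominator $\Pb(\tau_\pi<H)$ must match $\1\{\targettime\le\horizon\}$; if the strict inequality is meant literally, I would adjust the indexing of $\targettime$ accordingly. The infimum should also range over the same (randomised, non-stationary) policy class used in defining $V^{\lambda,\target}_{\versionstate}$. This is harmless for the inequality $\realprice\le\rho$, and for $\rho\le\realprice$ it suffices that an optimal policy exists in that class.
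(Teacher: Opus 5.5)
Your proposal is correct and follows essentially the same route as the paper. Both arguments write $\lambda\mapsto V^{\lambda,\target}_{\versionstate}(1,\startlaw)$ as a supremum of affine functions $a_\pi+\lambda p_\pi$ with slopes in $[0,1]$. Both then use a DP-attained maximiser at the price, together with the hypothesis $V^{0,\target}_{\versionstate}(1,\startlaw)<M$, to force $p_\pi>0$, and use $a_\pi+\lambda_\pi p_\pi=M$ policy by policy for the other inequality; the paper packages this as the set identity $\{\lambda\ge 0: f(\lambda)\ge M\}=[L,\infty)$. Your remarks on finiteness of the price and on reading the denominator as $\Pb(\tau_\pi\le H)$ (which is what the paper's own proof uses) are sound and make explicit points the paper leaves implicit.
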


Let us briefly show that the ratio introduced in \cref{lemma: clearing price as ratio} serves as a computable proxy for the ``regret-to-information" ratio underlying IDS.
The numerator in \eqref{eq: clearing price as ratio} is an upper bound on the regret of $\pi$, since 
\[ V^{\maskpar^\star}(1,\startlaw) \le {\underset{\mask\in\versionstate}{\sup}} V^{\mask}(1,\startlaw)\]
and
\[\Eb\left[\sum_{h=1}^{\tau_\pi} r(S^\pi_h,\pi_h(S^\pi_h))\right] \leq \Eb\left[\sum_{h=1}^{H} r(S^\pi_h,\pi_h(S^\pi_h))\right], \]
which can be computed under $\versionstate_t$. Similarly, the denominator is a computable lower bound on the expected number of resolved dimensions, since $\tau_\pi\le H$ and, a.s., $\1\{\tau_\pi\le H\}\le \undecidedpotential{\confset_0}(\versionstate_t) - \undecidedpotential{\confset_0}(\versionstate_{t+1})$. This quantity is used at analysis time in \cref{prop: lambda planning regret}.

\section{Application to Transfer Learning}\label{sec:transfert}
 
In RL, beyond single tasks, the literature has increasingly aimed to improve computational efficiency by learning multiple instances at once or sequentially. The former has generally been studied in \emph{zero-shot} RL \cite{touati2022does}, while the latter falls under \emph{transfer learning}. Given a learnt source and unlearnt target task, transfer aims to leverage shared structures to improve learning, see  \citet{pan2009survey} for a broad survey. In RL, the common method is to pre-train representations of environments on large datasets, and then fine-tune them on small-data target tasks \citep{yosinski2014transferable}. There have been a number of proposed representations: learnt policies \citep{fernandez2006probabilistic}, value functions \citep{taylor2007cross}, latent state features \citep{parisotto2016actor}, successor features \citep{barreto2017successor,barreto2018transfer}, samples \citep{lazaric2008transfer}, or models of the environment \citep{sun2021model}. In all cases, the difficulty lies not only in transferring the correct information, but also in how to spread learning cost between source and target. 

Applied to masked MDPs, we seek to transfer information about two masks for the same linear MDP. The medium of transfer we propose is to choose the set $\confset_0$ of VSO and $\lambda$-planning from information obtained about a prior mask in three experimental frameworks. Throughout, the base MDP 
always has $10$ states, $5$ actions (one of which aborts the MDP independently of the mask), $\startlaw=\delta_{s_1}$, and horizon $10$. The transfer procedure is repeated for $1000$ random MDPs and masks, all figures report averaging on these samples. 
At each random trial, independently for each pair $(s,a)$, we draw three coordinates of $\phi(s,a)$ uniformly, and draw  values for them according to a $\texttt{Dirichlet}(1,1,1)$ distribution, setting the rest to $0$. 
The function $\theta:\statespace\to\Rb^d$ is $0$ everywhere except at two uniformly drawn states, with respective weights $w$ and $1-w$, for $w\sim\texttt{Unif}[0,1]$.
Meanwhile, the reward parameter $\theta_r$ is drawn uniformly in $[0,1]^d$ and the source mask is drawn uniformly at random on the unit disk in the hyperplane orthogonal to the vector of ones.\footnote{Masks near-orthogonal to this plane allow/forbid nearly all actions, leading to degeneration. Similarly, we re-sample masks with no valid actions.} We take this disk as $\confset_0$ in what follows. Our three experiments differ in the sampling process of the target mask with respect to the source $\maskpar^{(1)}$ which is sampled uniformly in $\confset_0$. All transfer version spaces will be intersected with $\confset_0$ to bring the parameter back into the hyperplane. Let $\oplus$ denote the Minkowski sum: $A\oplus B=\{a+b:a\in A,\ b\in B\}$.

\subsection{Single-step transfer}

In our first setting, the source mask $\maskpar^{(1)}$ is assumed to be known, and the target mask $\maskpar^{(2)}$ to lie within distance $\rho$ of it. We therefore initialize the target algorithm from the restricted version space
\[
\confset_0^{\mathrm{t}}
:=
\confset_0\cap
\bigl\{\maskpar:\lVert\maskpar-\maskpar^{(1)}\rVert_2\le\rho\bigr\}.
\]
To assess the benefit of this prior information, we compare each algorithm initialized from $\confset_0^{\mathrm{t}}$ with the same algorithm run on the same target task from the full version space $\confset_0$.

\begin{figure}[H]
\centering
\includegraphics[width=\linewidth]{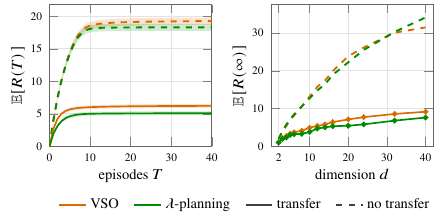}
\caption{Single-step transfer. Left: expected regret against the number of episodes, $d=16$, $\rho=0.13$. Right: total regret against $d\le40$, with $\rho=0.5/\sqrt{d-1}$. Solid: started from $\confset_0^{\mathrm t}$; dashed: no transfer, started from $\confset_0$.}
\label{fig:proposal-single}
\end{figure}
Without transfer (dashed lines), the two algorithms are close, $\lambda$-planning performing slightly better as $d$ grows. With transfer, both perform much better, and $\lambda$-planning is ahead of VSO at every dimension but the smallest, showing the (on average) lower regret of $\lambda$-planning. Moreover, Figure~\ref{fig:proposal-radius}
shows that transfer reduces regret and strengthens the
advantage of $\lambda$-planning when the transferred version space is tight (small $\rho$). This is
consistent with $\lambda$-planning exploiting certified actions to
acquire information at lower regret. As $\rho$ increases, fewer actions
are certified initially and this advantage fades.

\begin{figure}[ht]
\centering
\includegraphics[width=\linewidth]{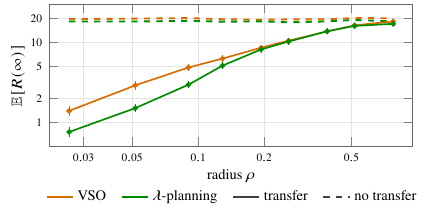}
\caption{Single-step transfer: total regret against the transfer radius $\rho$, $d=16$, log-log axes. Solid: started from $\confset_0^{\mathrm t}$; dashed: no transfer, started from $\confset_0$.}
\label{fig:proposal-radius}
\end{figure}

\subsection{Multiple-step transfer}
Building on the previous setting, we consider a sequence of $K+1$ tasks with mask parameters $\{\maskpar^{(k)}\}_{k=0}^K$, sampled from a uniform source $\maskpar^{(0)}\in\Cc_0$ then sequentially uniformly in the ball $\Bc(u^{(k-1)},\rho)$. We repeat the transfer along the chain by taking the version spaces are as follows: Task $1$ starts from $\confset_0\cap B(\maskpar^{(0)},\rho)$, and task $k+1$ from $(\versionstate_k\oplus B(0,\rho))\cap\confset_0$, where $\versionstate_k$ is the version space transferred from task $k$. This yields a sequential transfer scheme with a set-valued warm start, in the spirit of procedures that leverage information from a previous optimization problem to facilitate the next one \citep{ash2020warm}. Each task is thus solved within a region informed by the preceding task, rather than from the full parameter space.

\begin{figure}[ht]
\centering
\includegraphics[width=\linewidth]{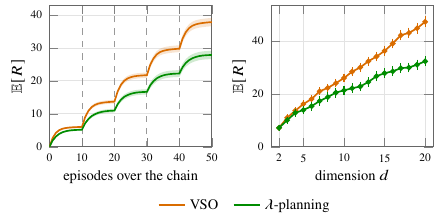}
\caption{Multiple-step transfer, $K=5$ tasks of $10$ episodes. Left: regret accumulated over the chain, $d=16$; dashed vertical lines mark the task switches. Right: total regret over the $K$ tasks against $d$.}
\label{fig:proposal-chain}
\end{figure}
Figure \ref{fig:proposal-chain} shows that  $\lambda$-planning retains its edge over time. Its lead per task stacks across tasks, so that the gap accumulates along the chain (left), and it widens with the dimension (right). The compounding aspect of this procedure shows that $\lambda$-planning can turn a slight advantage in learning in each transfer task into  substantial gains in sequential transfer settings.

We present further experiments on partial-learning of transferable information in Appendix \ref{app:transfer}.

\section{Conclusion}

In this work, we study masked MDPs with unknown action validity and exact linear score feedback. Our separation result gives conditions under which mask-learning costs enter additively when rewards and transitions are also unknown. Version-space optimism achieves minimax-optimal regret, however, minimax optimality can conceal substantial differences in the cost of acquiring information.

To exploit these differences, we introduced $\lambda$-planning, which plans exploration through an auxiliary planning problem. Its clearing price yields an instance dependent regret bound and connects the method to information-directed sampling for deterministic feedback. Transfer experiments illustrate how version spaces preserve useful information across changing masks and how planned exploration can produce gains that accumulate across tasks.

\clearpage

\subsection*{AI use statement}

In this work, we used generative AI tools for several tasks: review or verification of content, and generation of code or proof drafts from formal arguments.
We have not used generative AI tools for theoretical or conceptual development, mathematical claims or proofs (beyond formalisation and edition), while method implementation, translation, dataset cleaning or reformatting, qualitative analysis, and interpretation of results are not applicable to this work.
Additionally, we used generative AI tools for literature discovery, editing text, creating figures and code.
All AI-assisted work was checked line by line by the authors.
We take responsibility for the final content of this work, including text, claims, or artifacts produced with the aid of generative AI.

\subsubsection*{Acknowledgements}
The work of Hélène Halconruy was supported by the French National Research Agency (ANR) under grant ANR-24-CE40-3341 (project DECATTLON) and benefited from Hi! PARIS and State funding managed by the ANR under the France 2030 program (reference ANR-23-IACL-0005).

\bibliographystyle{plainnat}
\bibliography{biblio}

\clearpage

\appendix

\onecolumn
\section*{Appendix}

The appendix is organised as follows. \Cref{app:notation} collects the
notation used throughout the paper. \Cref{sec:proofs} first introduces the
quantities used to track the information gathered on the mask parameter, then
provides the proofs of our results: \cref{app:proof-dp} establishes the dynamic
programming representation for the auxiliary MDP problem
(\cref{prop: dynamic programming for lambda MDP}),
\cref{app:proof-price} derives the clearing-price characterisation
(\cref{lemma: clearing price as ratio}), \cref{app:price}
proves the regret bound of $\lambda$-planning (\cref{prop: lambda planning regret}),
\cref{app:proof-lowerb} proves the lower bound for mask learning
(\cref{thm: lower bound for mask learning}), and \cref{app:proof-separation}
proves the separation result (\cref{thm:separation}). \Cref{app:transfer}
experiments with transfer from a partially learnt source task. Finally,
\cref{app:EVI} presents action optimism, an alternative to version-space
optimism, and links it to extended value iteration.

\section{Notations}\label{app:notation}

Table~\ref{tab:notation} lists the notation of the paper, grouped by topic.
A subscript $t$ denotes the value of a quantity
at episode $t$.

\begingroup
\small
\renewcommand{\arraystretch}{1.0}
\setlength{\LTleft}{0pt}
\setlength{\LTright}{0pt}
\begin{longtable}{@{}>{\raggedright\arraybackslash}p{0.34\textwidth}>{\raggedright\arraybackslash}p{0.62\textwidth}@{}}
\caption{Notations used in the paper.}\label{tab:notation}\\
\toprule
\textbf{Symbol} & \textbf{Meaning} \\
\midrule
\endfirsthead
\multicolumn{2}{@{}l}{\tablename~\thetable\ (continued)} \\
\toprule
\textbf{Symbol} & \textbf{Meaning} \\
\midrule
\endhead
\bottomrule
\endfoot

\multicolumn{2}{@{}l}{\textbf{MDP and masks}} \\
$\MDP=(\statespace,\actionspace,\transition,\reward,\horizon)$ & finite-horizon MDP \\
$h\in[\horizon]$, $[n]$ & step; $\{1,\dots,n\}$ \\
$\startlaw$, $s_1$ & initial law; initial state \\
$\feature(\state,\action)\in\Rb^d$, $d$ & feature map; dimension \\
$\theta_r\in\Rb^d$, $\theta_P:\statespace\to\Rb^d$ & reward and transition parameters \\
$\mask\in2^{\statespace\x\actionspace}$ & mask \\
$\MDP_\mask$, $r_\mask$, $P_\mask$ & masked MDP, reward, transitions \\
$\cemeterystate$ & cemetery state \\
$\widetilde\feature$ & masked feature map \\
$\maskpar^\star$ (also $\theta_\mask$) & true mask parameter \\
$\maskpar$ & mask parameter \\

\addlinespace
\multicolumn{2}{@{}l}{\textbf{Policies, values and regret}} \\
$\policy=(\policy_h)_{h\in[\horizon]}$, $\markovpolicies$ & policy; Markov policies \\
$\State^{\policy}_h$, $Z_h$ & state at step $h$; pair played at step $h$ \\
$J_\policy$, $V$ & value of $\policy$; optimal value \\
$t$, $k$; $T$, $K$ & episode; number of episodes \\
$\regret_K(\Ac)$, $\cumulativeregret(T)$ & cumulative pseudo-regret \\
$\E_{\maskpar}$, $\Pb_{\maskpar}$ & expectation, probability under $\maskpar$ \\
$\episodicfiltration_t$ & information before episode $t$ \\

\addlinespace
\multicolumn{2}{@{}l}{\textbf{Version spaces}} \\
$\confset_0$ & prior set of mask parameters \\
$\Bc_2(0,1)$, $B(c,r)$ & unit ball; ball of centre $c$, radius $r$ \\
$\observedpairs$, $\observedpairs_t$ & observed pairs \\
$\Phi_{\observedpairs}$, $y$ & features of $\observedpairs$; revealed values \\
$\versionstate(\observedpairs)$, $\versionstate_t$ & version space \\
$\pair$, $\pairfeature{\pair}$ & pair; its feature \\
$\certvalid(\versionstate)$, $\certinvalid(\versionstate)$ & certified valid, certified invalid pairs \\
$\undecided(\versionstate)$ & uncertified pairs \\
$\featurematrix(A)$, $\undecidedrank{\versionstate}$ & features of $A$; dimension of their span \\
$\decideddirections(\versionstate)$ & directions known on $\versionstate$ \\
$\undecidedpotential{\confset_0}(\versionstate)$, $\potential_t$ & potential \\
$\informative(\versionstate)$ & informative pairs \\
$\Gamma_t$ & $\dim\feature(\undecided(\versionstate_t))$ \\

\addlinespace
\multicolumn{2}{@{}l}{\textbf{Algorithms}} \\
$\target$ & target set \\
$\targettime$ & first time $\target$ is reached \\
$\MDP_\lambda$ & auxiliary MDP \\
$J^{\lambda,\target}_{\versionstate,\policy}$, $V^{\lambda,\target}_{\versionstate}$ & value of $\policy$; optimal value in $\MDP_\lambda$ \\
$\realprice_\versionstate(\target)$, $\realprice_t$ & clearing price \\
$\As$, $B_{\mathrm{lin}}(T,H,\delta)$, $B^{\As,t}_h$ & linear-MDP learner; its regret bound; step bounds \\
$\Hc_t$, $\Ic^\policy_t(\chi,\Hc)$ & history; mutual information \\
\addlinespace
\multicolumn{2}{@{}l}{\textbf{Transfer experiments}} \\
$\rho$ & transfer radius\\
$\confset_0^{\mathrm t}$ & transfer prior \\
$\maskpar^{(\mathrm k)}$, $K$, $T_s$ & chain of masks; number of tasks; episodes per task \\
$A\oplus B$ & Minkowski sum \\
\end{longtable}
\endgroup

\section{Proofs}
\label{sec:proofs}
Let us first introduce the quantities used to track information
about the mask parameter. The subspace
\[
\decideddirections(\versionstate)
:= \bigl\{\, \theta \in \R^d \;:\; \maskpar \mapsto \theta^\top \maskpar
   \text{ is constant on } \versionstate \,\bigr\},
\]
contains the
directions whose scalar values are completely determined by the current version
space. The potential
\[
 \undecidedpotential{\confset_0}(\versionstate)
 :=
 \dim\featurematrix\bigl(\undecided(\confset_0)\bigr)
 -
 \dim\!\left(
   \featurematrix\bigl(\undecided(\confset_0)\bigr)
   \cap\decideddirections(\versionstate)
 \right).
\]
counts the remaining
unknown directions (when in $\versionstate$) in the span of the initially uncertified (in $\confset_0$) features,
while 
\[ \informative(\versionstate) := \bigl\{ \pair : \pairfeature{\pair} \in \decideddirections(\versionstate) + \featurematrix(\undecided(\versionstate)), \ \pairfeature{\pair} \notin \decideddirections(\versionstate) \bigr\} \supseteq \undecided(\versionstate). \]
identifies exactly the pairs that resolve at least one
such direction. This construction allows us to relate visits to $\informative(\versionstate)$ to decreases
in $\undecidedpotential{\confset_0}(\versionstate)$.

These quantities distinguish knowledge of an action's validity from
knowledge of its underlying scalar value: an already certified action may still provide useful scalar information, such as certifying further actions for example if $\pairfeature{\pair_3}^\top \maskpar^\star> \lVert \pairfeature{\pair_3} - \pairfeature{\pair_4}  \rVert$ in Figure~\ref{fig:trichotomy}. 

Throughout, $\versionstate_t$ is the knowledge state entering episode $t$, $\potential_t:=\undecidedpotential{\confset_0}(\versionstate_t)$, and $\maskpar^\star\in\confset_0$ the truth.

\subsection{Proof of Proposition~\ref{prop: dynamic programming for lambda MDP}}
\label{app:proof-dp}
The auxiliary MDP is the planning tool underlying
$\lambda$-planning. Proposition~\ref{prop: dynamic programming for lambda MDP} provides the dynamic programming
representation needed to compute its optimal policy for a fixed
exploration reward.
\lambdadynprog*
\begin{proofof}{Proposition~\ref{prop: dynamic programming for lambda MDP}}
The following recursion in 
\[
V^{\lambda,\target}_{\versionstate}(h, \state) 
= \lambda\1_{\{\exists a:\, (\state, a) \in \target\}}\vee \max_{a \in \certvalid(\versionstate_t)} \bigg\{ \reward(\state, a)\left.+ \int V^{\lambda,\target}_{\versionstate}(h+1, s') P(\de s'|\state, a)\right\},
\]
is exactly the value-iteration scheme for a finite-horizon MDP with known dynamics, in which each state retains its certified actions, together with their rewards and transitions, with the possibility of absorption into the terminal state with reward $\lambda$ whenever an action of $\target$ is present. 

A policy of this auxiliary MDP has a stopping time $\targettime$ at which it takes its first action absorbing action, with $\targettime=\infty$ if no contact is made. Its expected cumulative reward is the reward collected before $\targettime$, plus $\lambda$ on the event $\tau\le\horizon$, namely 
\[
J^{\lambda,\target}_{\versionstate,\cdot}(\targetpolicy, t, \state) =  \E\bigg[\sum_{h=t}^{(\targettime-1)\wedge \horizon}\reward(S^{\targetpolicy}_h, \targetpolicy_h(S^{\targetpolicy}_h)) + \lambda \1\{\targettime\le \horizon\}\mid S^{\targetpolicy}_t=\state\bigg].
\]

\end{proofof}
\subsection{Proof of Lemma~\ref{lemma: clearing price as ratio}}
\label{app:proof-price}
Here we prove the view of the clearing price that is the basis for the comparison with information-directed sampling.
\clearingpriceratio*
\begin{proofof}{Lemma~\ref{lemma: clearing price as ratio}}
\label{proof:price}
\noindent
Let $\target$ be a set state-action pairs containing $\undecided(\versionstate)$, denote the function $f:\lambda\mapsto V^{\lambda,\target}_{\versionstate}(1,\startlaw)$. For a policy $\pi$, note its stopping time be $\targettime$ at which it takes its first action in $\target$. Noting
\[L = \inf_{\pi} \frac{\sup_{\mask\in\versionstate} V^{\mask}(1,\startlaw) - \Eb\left[\sum_{h=1}^{(\tau_\pi -1)} r(S^\pi_h,\pi_h(S^\pi_h))\right]}{\Pb_{\versionstate}(\tau_\pi \leq H)}
\]
Let us show that
\[
 \{\lambda\ge0: f(\lambda)\ge \sup_{\mask\in\versionstate} V^{\mask}(1,\startlaw)\}
 =
 \left[L,\infty \right),
\]
Taking the minimum over $\lambda$ will close the argument.
By Proposition~\ref{prop: dynamic programming for lambda MDP}, $f$ is a supremum over policys $\policy$ of the affine functions
\[
g_\policy:\lambda\mapsto \lambda \Pb\left(\targettime\le\horizon\right) + \Eb\left[\sum_{h=1}^{\targettime-1} r(S^\pi_h,\pi_h(S^\pi_h))\right],
\]
whose slopes $\Pb\left(\targettime\le\horizon\right)$ are probabilities, hence lie in $[0,1]$.
Thus, for all $\lambda, \lambda' \ge 0$ and every route $\policy$,
\[
  g_\policy(\lambda) \le g_\policy(\lambda') + |\lambda - \lambda'|
  \le f(\lambda') + |\lambda - \lambda'| .
\]
Taking the supremum over routes gives
$f(\lambda) \le f(\lambda') + |\lambda - \lambda'|$, where $f(\lambda') \le H + \lambda' < \infty$.
Exchanging $\lambda$ and $\lambda'$ yields $|f(\lambda) - f(\lambda')| \le |\lambda - \lambda'|$,
so $f$ is $1$-Lipschitz, and in particular continuous.This will be used later.
\\
Suppose $f(\lambda)\ge \sup_{\mask\in\versionstate} V^{\mask}(1,\startlaw)$. By the finite-horizon dynamic-programming recursion, the supremum is attained by some policy $\policy^{\lambda}$ such that 
\[
f(\lambda) = g_{\policy^\lambda}(\lambda).
\]
Thus
\[
\lambda \Pb\left(\tau_{\pi^{\lambda}}\le\horizon\right)
 \ge \sup_{\mask\in\versionstate} V^{\mask}(1,\startlaw)-\Eb\left[\sum_{h=1}^{\tau_{\pi^{\lambda}}-1} r(S_{h}^{\pi^{\lambda}_h},\pi^{\lambda}_h(S_{h}^{\pi^{\lambda}_h}))\right]
 \ge \sup_{\mask\in\versionstate} V^{\mask}(1,\startlaw)-f(0) > 0 
\]
where the last inequality is in the lemmas assumptions.
This shows $\Pb\left(\targettime\le\horizon\right)>0$, thus
\[
\frac{\sup_{\mask\in\versionstate} V^{\mask}(1,\startlaw) -  \Eb\left[\sum_{h=1}^{\tau_{\pi^{\lambda}}-1} r(S_{h}^{\pi^{\lambda}_h},\pi^{\lambda}_h(S_{h}^{\pi^{\lambda}_h}))\right]}{\Pb\left(\tau_{\pi^{\lambda}}\le\horizon\right)} \le \lambda.\]
Taking the infimum over both sides gives us $\lambda\ge L$.
\\
Conversely, let $\lambda>L$. The infimum definition of $L$ implies that there exists some  route $\policy$  such as 
\[
\frac{\sup_{\mask\in\versionstate} V^{\mask}(1,\startlaw) - \Eb\left[\sum_{h=1}^{\tau_\pi-1} r(S^\pi_h,\pi_h(S^\pi_h))\right]}{\Pb_{\versionstate}(\tau_\pi <H)} <\lambda .
\]
Rearranging and taking the supremum gives us $f(\lambda)> \sup_{\mask\in\versionstate} V^{\mask}(1,\startlaw)$. By continuity of $f$, we also have $f(L)\ge \sup_{\mask\in\versionstate} V^{\mask}(1,\startlaw)$ which gives us the same large inequality the edge case of $\lambda = L$.
\end{proofof}

\subsection{Proof of Proposition~\ref{prop: lambda planning regret}}\label{app:price}
We prove the main performance guarantee for
$\lambda$-planning.
\lambdaplanningregret*
\begin{proofof}{Proposition~\ref{prop: lambda planning regret}}
Fix $\maskpar\in\confset_0$, and let
$\episodicfiltration_t$ denote the information available before episode
$t$, including the choice of $\pi_t$. Set
\[
 \realprice_t
 =\realprice_{\versionstate_t}\bigl(\informative(\versionstate_t)\bigr)
 \quad\text{and}\quad
 \potential_t=\undecidedpotential{\confset_0}(\versionstate_t),
\]
where
\[
 \undecidedpotential{\confset_0}(\versionstate)
 :=
 \dim\featurematrix\bigl(\undecided(\confset_0)\bigr)
 -
 \dim\!\left(
   \featurematrix\bigl(\undecided(\confset_0)\bigr)
   \cap\decideddirections(\versionstate)
 \right).
\]
and 

Let $\tau_{\pi_t}$ be the first time that $\pi_t$ reaches
$\informative(\versionstate_t)$, with $\tau_{\pi_t}=\infty$ if no such
pair is reached during the episode. The reach probability is

$\Pb_{\maskpar}(\tau_{\pi_t}\le\horizon\mid\episodicfiltration_t)$. The fact that we condition on $\episodicfiltration_t$ is because we are no longer considering any $\undecided(\versionstate) \subseteq \target$
but the random variable $\informative(\versionstate_t)$.

By definition of $\pi_t$ we have 
\[
J^{\realprice_t,\target}_{\versionstate,\pi_t}(h, \state) \geq \sup_{\mask\in\versionstate} V^{\mask}(1,\startlaw) \geq V^{\maskpar}(1,\startlaw)
\]
In the proof of Lemma~\ref{lemma: clearing price as ratio} we show 
that $\pi_t$ is such that 
$\realprice_{\versionstate_t}\bigl(\informative(\versionstate_t)\bigr)$:

 \begin{align*}
 J^{\realprice_t,\target}_{\versionstate,\pi_t}(h, \state) & = \realprice_t \Pb_{\maskpar}(\tau_{\pi_t}\le\horizon\mid\episodicfiltration_t) + \Eb_{\maskpar}\left[\sum_{h=1}^{\tau_\pi-1} r(S^\pi_h,\pi_h(S^\pi_h))\mid\episodicfiltration_t\right] \\
 & \leq \realprice_t \Pb_{\maskpar}(\tau_{\pi_t}\le\horizon\mid\episodicfiltration_t) + J_{\pi_t}^{\maskpar}(1,\startlaw)
 \end{align*}
thus 
\begin{equation}
 \label{eq:lambda-proof-one-episode}
 \begin{aligned}
 &V^{\maskpar}(1,\startlaw)
       -J_{\pi_t}^{\maskpar}(1,\startlaw)\\
 &\qquad\le\realprice_t\,
 \Pb_{\maskpar}\!\left(
       \tau_{\pi_t}\le\horizon\mid\episodicfiltration_t\right).
 \end{aligned}
\end{equation}
As in the previous proof, consider the case where we are on a non zero-regret episodes, so its
right-hand side is nonnegative.

It remains to relate the probability in
\eqref{eq:lambda-proof-one-episode} to the increment in $\potential_t$ and taking the expectation will yield the result. 
Since $\versionstate_{t+1}\subseteq\versionstate_t$,
\[
 \decideddirections(\versionstate_t)
 \subseteq\decideddirections(\versionstate_{t+1}),
\]
so $\potential_{t+1}\le\potential_t$ on every sample path.

Recall 
\[ \informative(\versionstate) := \bigl\{ \pair : \pairfeature{\pair} \in \decideddirections(\versionstate) + \mathrm{span}\,\featurematrix(\undecided(\versionstate)), \ \pairfeature{\pair} \notin \decideddirections(\versionstate) \bigr\} \supseteq \undecided(\versionstate). \]
On the event $\{\tau_{\pi_t}\le\horizon\}$, let $\pair$ denote the
first pair reached in $\informative(\versionstate_t)$. Thus we can write
\[
 \pairfeature{\pair}=w+v,
 \qquad w\in\decideddirections(\versionstate_t),
\]
\[
 v\in\mathrm{span}\,\featurematrix(\undecided(\versionstate_t)),
 \qquad v\notin\decideddirections(\versionstate_t).
\]
Because $\versionstate_t \subseteq \confset_0$ we have
$\undecided(\versionstate_t)\subseteq\undecided(\confset_0)$,
so $v\in\mathrm{span}\,\featurematrix(\undecided(\confset_0))$.
Playing $\pair$ reveals $\pairfeature{\pair}^\top \maskpar$,
, hence
$\pairfeature{\pair}\in\decideddirections(\versionstate_{t+1})$.
Since $w$ also belongs to this subspace,
\[
 v=\pairfeature{\pair}-w
 \in\mathrm{span}\,\featurematrix(\undecided(\confset_0))
       \cap\decideddirections(\versionstate_{t+1}),
\]
whereas
$v\notin\mathrm{span}\,\featurematrix(\undecided(\confset_0))
       \cap\decideddirections(\versionstate_t)$.
Recall
\[
 \undecidedpotential{\confset_0}(\versionstate)
 :=
 \dim\featurematrix\bigl(\undecided(\confset_0)\bigr)
 -
 \dim\!\left(
   \featurematrix\bigl(\undecided(\confset_0)\bigr)
   \cap\decideddirections(\versionstate)
 \right).
\]
The intersection in
$\undecidedpotential{\confset_0}$ therefore gains at least one
dimension. This proves
\begin{equation}
 \label{eq:lambda-proof-potential-drop}
 \1\{\tau_{\pi_t}\le\horizon\}
 \le\potential_t-\potential_{t+1}.
\end{equation}

Taking conditional expectations in
\eqref{eq:lambda-proof-potential-drop} and substituting into
\eqref{eq:lambda-proof-one-episode}, we obtain
\[
 \begin{aligned}
 &V^{\maskpar}(1,\startlaw)
       -J_{\pi_t}^{\maskpar}(1,\startlaw)\\
 &\quad\le\E_{\maskpar}\!\left[
       \realprice_t(\potential_t-\potential_{t+1})
       \,\middle|\,\episodicfiltration_t\right],
 \end{aligned}
\]
where we used that $\realprice_t$ is nonnegative and
$\episodicfiltration_t$-measurable. Finally, summing over episodes
and applying the tower property gives
\begin{align*}
 &\E_{\maskpar}[\cumulativeregret(T)]\\
 &=\sum_{t=1}^T\E_{\maskpar}\!\left[
     V^{\maskpar}(1,\startlaw)
       -J_{\pi_t}^{\maskpar}(1,\startlaw)\right]\\
 &\le\E_{\maskpar}\!\left[
       \sum_{t=1}^T\realprice_t
                (\potential_t-\potential_{t+1})\right].
\end{align*}
Hence the result.
\end{proofof}

\begin{remark}
Proposition~\ref{prop: dynamic programming for lambda MDP} and Lemma~\ref{lemma: clearing price as ratio} are proved for arbitrary target sets, the agent
being allowed to stop at any pair of $\undecided(\versionstate_t)\subseteq \target$, the same proof gives the same bound
for any $\mathcal{F}_t$-measurable targets
$\undecided(\versionstate_t)\subseteq \target_{t}\subseteq I(\versionstate_t)$, with
prices $\mathring{\lambda}_{\versionstate_t}(\target_{t})$. These prices are
non-increasing in $\target_{t}$, so $\target_{t}=I(\versionstate_t)$ yields the smallest ones.
\end{remark}

\subsection{Proof of \texorpdfstring{\cref{thm: lower bound for mask learning}}{the lower bound}}
\label{app:proof-lowerb}
Here we provide explicit construction of the MDP offering the lower bound on the regret of any learning algorithm.
\lowerb*
\begin{proofof}{\cref{thm: lower bound for mask learning}}
    \label{proof:lowerb}
    Let $N := d - 1 \ge 2$, take the initial distribution
    $\rho = \delta_{s_1}$, and consider the masked MDP of
    \cref{fig:doors}: $\actionspace = \{\action_1,\dots,\action_N\}$;
    from $s_1$, every action leads deterministically to $s_2$ with
    reward $1$, and in $s_2$ every action loops deterministically on
    $s_2$ with reward $1$.

    Take
    \[
        \feature(s_1,\action_j) = e_j,
        \qquad
        \feature(s_2,\action_j) = e_d,
        \qquad j \in [N].
    \]
    This is a linear MDP: for instance, one may take
    \[
        \theta_r = \sum_{i=1}^d e_i,
        \qquad
        \theta_P(s_2) = \sum_{i=1}^d e_i,
        \qquad
        \theta_P(s) = 0 \quad \text{for } s \neq s_2.
    \]
    Then
    \[
        r(s,\action)
        = \feature(s,\action)^\top \theta_r
        = 1
    \]
    on both $s_1$ and $s_2$, while
    \[
        P(s,\action,s_2)
        = \feature(s,\action)^\top \theta_P(s_2)
        = 1.
    \]
    Moreover, every pair is uncertified under $\confset_0$, and
    \[
        \operatorname{span}
        \bigl(\feature(\undecided(\confset_0))\bigr)
        = \operatorname{span}(e_1,\dots,e_d),
    \]
    so $\undecidedrank{\confset_0} = d$.

    For $J \in [N]$, let
    \[
        \maskpar^{(J)}
        := \frac{1}{\sqrt d}
        \Bigl(
            e_J
            - \sum_{j \in [N]\setminus\{J\}} e_j
            + e_d
        \Bigr)
        \in \confset_0,
    \]
    under which $\action_J$ is the only valid action in $s_1$, while
    every action is valid in $s_2$. Draw $J$ uniformly in $[N]$.
    Every episode in which an invalid action is played in $s_1$
    incurs regret $\horizon$, since the optimal value is $\horizon$
    whereas the episode then moves to the cemetery state and collects
    no reward.

    Playing $\action_j$ with $j \neq J$ reveals the score
    $-1/\sqrt d$, independently of the value of $J$, and rewards and
    transitions are deterministic and known. Conditional on the
    internal randomisation of the algorithm, let
    $(k_1,k_2,\dots)$ denote the sequence of actions selected in
    $s_1$ along the counterfactual history in which every action tried
    so far is invalid. Since the feedback following any such failure
    is always the same, this sequence is independent of $J$.

    If $J \notin \{k_1,\dots,k_t\}$,
    then the realised history coincides with this all-failures history
    through episode $t$, and episode $t$ incurs regret $\horizon$.
    Since $J$ is uniform on $[N]$ and
    $\lvert\{k_1,\dots,k_t\}\rvert \le t$,
    \[
        \mathbb P\bigl(J \notin \{k_1,\dots,k_t\}\bigr)
        \ge \frac{N-t}{N}.
    \]
    Therefore,
    \begin{align*}
        \E\bigl[\cumulativeregret(T)\bigr]
        &\ge \horizon
              \sum_{t=1}^{\min\{T,N\}}
              \frac{N-t}{N} \\
        &= \horizon\,\frac{m(2N-m-1)}{2N} \\
        &\ge \horizon\,\frac{m(N-1)}{2N} \\
        &\ge \frac{\horizon m}{4},
    \end{align*}
    where $m := \min\{T,N\} \le N$ and $N \ge 2$.

    Since the supremum over $\maskpar^\star \in \confset_0$ dominates
    the average over $J$, this gives
    \[
        \sup_{\maskpar^\star \in \confset_0}
        \E_{\maskpar^\star}
        \bigl[\cumulativeregret(T)\bigr]
        \ge \frac{\horizon}{4}\min\{T,d-1\}.
    \]
    The second inequality follows from
    \[
        \min\{T,d-1\}
        \ge \frac12\min\{T,d\},
        \qquad d \ge 2.
    \]
\end{proofof}

\subsection{Proof of \texorpdfstring{\cref{thm:separation}}{the separation theorem}}
\label{app:proof-separation}

Here we prove that that mask learning can be separated from the problem of learning the the underlying MDP, motivating our choice to study the mask learning problem in isolation. 

\separation*
\begin{proofof}{\cref{thm:separation}}
Let
\[
    \Psi_t:=\undecidedpotential{\confset_0}(\versionstate_t).
\]
Let $X_t$ be the indicator of the event that an invalid action is attempted in episode $t$. 
If $X_t=1$, the attempted action is truly invalid, so its exact observed
mask score is strictly negative. Its feature was initially uncertified:
it could not have been initially certified valid, and the algorithm does
not select certified-invalid pairs. Moreover, its feature is not in
$\decideddirections(\versionstate_t)$, since otherwise its score would
already be known to be negative and the pair would be certified invalid.
The feature therefore lies in the span of the initially undecided
features but adds a new known direction when its scalar score is
observed. Thus $\Psi_t-\Psi_{t+1}\ge 1$ whenever $X_t=1$. The potential
is nonincreasing and starts at $r_0$, which gives, pathwise,
\[
    \sum_{t=1}^T X_t\le \min\{T,r_0\}.
\]

The sequence $X_t-p_t$ is a martingale-difference sequence with
increments of range length one. By the one-sided Azuma--Hoeffding
inequality, with probability at least $1-\delta/2$,
\[
    \sum_{t=1}^T p_t
    \le \sum_{t=1}^T X_t
       +\sqrt{\frac{T}{2}\log\frac{2}{\delta}}
    \le \min\{T,r_0\}
       +\sqrt{\frac{T}{2}\log\frac{2}{\delta}}.
\]
Intersect this event with the assumed bonus event, also of probability
at least $1-\delta/2$. On the intersection, sum the episode-wise
decomposition $\Delta_t\le\beta_t+\horizon p_t$ over $t$ to obtain
the claimed bound. A union bound gives probability at least $1-\delta$.
\end{proofof}

\section{Partially learnt source}\label{app:transfer}
In this Section we consider a single-transfer setting in which the source task $\maskpar^{(1)}$ is only partially learnt before transfer. Starting from $\confset_0$, the source algorithm is stopped once $\partialdirections$ of the $d-1$ mask directions have been identified. The resulting version space $\versionstate_{0}$ 
is then transferred to the target, drawn uniformly in $\confset_0\cap B(\maskpar^{(1)},\rho)$ whose version space becomes
$(\versionstate_{0}\oplus B(0,\rho))\cap \confset_0$.
This partial transfer preserves the information already learnt on the source even though the entire source mask is not learnt, making it less restrictive than transferring the source solution entirely. As in the single-step setting, we compare this initialization with the same algorithm run on the same target from the full version space $\confset_0$. 

\begin{figure}[H]
\centering
\includegraphics[width=.75\linewidth]{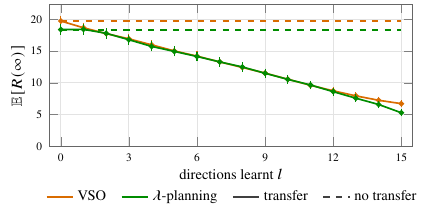}
\caption{Partially learnt source: total regret against the number $\partialdirections$ of source directions learnt before transfer, $d=16$. Solid: started from $(\versionstate_0\oplus B(0,\rho))\cap\confset_0$; dashed: no transfer, started from $\confset_0$.}
\label{fig:proposal-partial}
\end{figure}

Figure~\ref{fig:proposal-partial}
shows that useful transfer does not require a fully learnt
source: additional identified directions progressively reduce target
regret for both algorithms. Their similar performance for most values
of $l$ indicates that much of the improvement comes from the transferred
information itself. A slight advantage for $\lambda$-planning appears
when all source directions have been identified, presumably arising from the large a proportion of certified actions that it can use to plan.

\section{Action optimism - Extended Value Iteration}\label{app:EVI}

Version-space optimism requires finding a single consistent mask under
which the optimal value is maximal. Although planning for a fixed mask
is straightforward, searching over compatible masks can be costly. This motivates a relaxation that applies optimism
locally, in the spirit of extended value iteration: retain an action
whenever some parameter in the current version space makes it valid.
Once these action sets are constructed, the policy is obtained by a
single backward value-iteration pass. This relaxation avoids a separate
planning problem for each compatible mask.

The policy  played is the optimal policy $\policy_\versionstate = VI(\MDP(\optimisticgate_\versionstate))$ of the MDP with the following optimistic action set:
\[
\optimisticgate_\versionstate(\state):=\bigcup_{\maskpar\in\versionstate}\validactions{\maskpar}(\state)
=\{\action:\ \exists\,\maskpar\in\versionstate,\ \inner{\feature(\state,\action)}{\maskpar}\ge0\}
\]
Note that there is no reason why one mask structurally admits them all, as different actions may be
justified by different parameters, so the algorithm may plan with a model that does not exist in the version space. 
The relaxation is also strictly looser, noting $\optimisticvalue_{1,\versionstate}(\startlaw) = V^{\policy_\versionstate, \MDP(\optimisticgate_\versionstate)}(\startlaw)$ we have
\[
\sup_{\mask\in\versionstate}V^{\mask}(1,\startlaw)\le\optimisticvalue_{1,\versionstate}(\startlaw)
\]

\end{document}